\newtheorem{remark}{Remark}
\newtheorem{theorem}{Theorem}[section]
\newtheorem{corollary}{Corollary}[section]
\newtheorem{lemma}{Lemma}[section]
\newenvironment{proof}{{\noindent\it Proof}\quad}{\hfill $\square$\par} 
\numberwithin{figure}{section}
\numberwithin{equation}{section}
\begin{document}

\title{Approximation Properties of Deep ReLU CNNs}
\author{Juncai He\footnotemark[1] \quad Lin Li\footnotemark[2]\quad  Jinchao Xu\footnotemark[3]}
\date{} 

\maketitle
\renewcommand{\thefootnote}{\fnsymbol{footnote}} 
\footnotetext[1]{Department of Mathematics, The University of Texas at Austin, Austin, TX 78712, USA (jhe@utexas.edu).} 
\footnotetext[2]{Beijing International Center for Mathematical Research, Peking University,
	Beijing 100871, China (lilin1993@pku.edu.cn). } 
\footnotetext[3]{Department of Mathematics, The Pennsylvania State University, University Park, PA 16802, USA (xu@math.psu.edu).} 

\maketitle
\begin{abstract}
This paper focuses on establishing $L^2$ approximation properties for deep ReLU convolutional neural networks (CNNs) in two-dimensional space.  
The analysis is based on a decomposition theorem for convolutional kernels with a large spatial size and multi-channels.
Given the decomposition result, the property of the ReLU activation function, and a specific structure for channels, a universal approximation theorem of deep ReLU CNNs with classic structure is obtained by showing its connection with one-hidden-layer ReLU neural networks (NNs).  
Furthermore, approximation properties are obtained for one version of neural networks with ResNet, pre-act ResNet, and MgNet architecture based on connections between these networks.
\end{abstract}


\section{Introduction}\label{sec:intro}
The purpose of this paper is to study the approximation properties of deep convolutional neural networks, including 
classic CNNs~\cite{lecun1998gradient,krizhevsky2012imagenet}, ResNet~\cite{he2016deep}, pre-act
ResNet~\cite{he2016identity}, and MgNet~\cite{he2019mgnet}.
CNN is a very efficient deep learning model~\cite{lecun2015deep,goodfellow2016deep}, which has been widely used in image processing, computer vision, reinforcement learning, and
also scientific computing~\cite{guo2016convolutional,karniadakis2021physics}. 
However, there is still very little mathematical analysis of CNNs and, therefore, limited understanding of them, especially for the 
approximation property of CNNs, which plays a functional role in their interpretation and development~\cite{kohler2020statistical,lin2021universal}.

In the last three decades, researchers have produced a large number of studies on the approximation and representation properties of fully connected neural networks with a single hidden 
layer~\cite{hornik1989multilayer,cybenko1989approximation,barron1993universal,leshno1993multilayer,bach2017breaking,klusowski2018approximation,siegel2020approximation,xu2020finite,siegel2022high} and deep neural networks (DNNs) with more than one hidden layer~\cite{montufar2014number,telgarsky2016benefits,poggio2017and,yarotsky2017error,lu2017expressive,arora2018understanding,shen2019nonlinear,he2020relu,guhring2020error,opschoor2020deep,he2021relu}. 
To our knowledge, however, the literature included very few studies on the approximation property of CNNs~\cite{bao2014approximation,zhou2018deep,oono2019approximation,zhou2020universality,petersen2020equivalence,kumagai2020universal}. 
In \cite{bao2014approximation}, the authors consider a type of ReLU CNN with one-dimensional (1D) input that is constituted by a sequence of convolution layers and a fully connected output layer. By showing that the identity operator can be realized by an underlying sequence of convolutional layers, they obtain the approximation property of the CNN directly from the fully connected layer.
In their analysis, the underlying convolutional layers do not contribute anything to the approximation power of the overall CNN.
Approximation properties with more standard CNN architecture have been studied in \cite{zhou2018deep,zhou2020universality} in relation to the kernel decomposition for 1D convolutional operation with periodic padding. This type of result, however, cannot be extended to CNNs with two-dimensional (2D) inputs, because essentially it is the polynomial decomposition theory~\cite{daubechies1992ten} for 1D.
In \cite{zhou2018deep,zhou2020universality}, the authors study a standard 1D ReLU CNN architecture consisting of a sequence of convolution layers and a linear layer and obtain approximation properties by showing that any fully connected layer can be decomposed as a sequence of convolution layers with the ReLU activation function. 
In \cite{petersen2020equivalence},  the authors extend the analysis in \cite{zhou2018deep,zhou2020universality} to 2D  ReLU CNNs with periodic padding for a very special function class. This class is in the form of $f(X)=[F(X)]_{1,1}+b$ in which $F: \mathbb{R}^{d\times d} \mapsto \mathbb{R}^{d\times d}$ satisfies the following translation invariant property
\begin{equation}
	F( S_{st}(X)) = S_{st}(F(X)), \quad\forall X\in \mathbb R^{d \times d},
\end{equation}
where $S_{st}: \mathbb{R}^{d\times d} \mapsto \mathbb{R}^{d\times d}$ is considered the translation operator defined as $\left[S_{st}(X)\right]_{i,j} = [X]_{i-s,j-t}$ for $1\le s,t \le d$ with periodic padding.
Here, $[Y]$ means taking the element of the tensor $Y$.
A generalized study of this function class and its application in approximation properties of CNNs can be found in~\cite{kumagai2020universal}.
In \cite{oono2019approximation}, the authors study the approximation properties of ResNet-type CNNs on 1D for the special function class that can be approximated by sparse NNs. 

First, we show a pure algebraic decomposition theorem, which plays a critical role in establishing the approximation theorem of deep ReLU CNNs, for 2D convolutional kernels with multi-channel and constant or periodic padding. 
The core idea in establishing such a decomposition result is to introduce channels, whereas the decomposition theorem in \cite{bao2014approximation,zhou2018deep,zhou2020universality} incorporates only one channel.
By applying a similar argument in~\cite{zhou2020universality}, we then establish a connection between one-hidden-layer ReLU NNs and deep ReLU CNNs without pooling layers. 
According to this connection, we prove the approximation theorem of classic deep ReLU CNNs,
which shows that this kind of CNN can provide the same asymptotic approximation rate as {one-hidden-layer} ReLU {NNs}.
Moreover, we obtain approximation results for ResNet and pre-act ResNet CNNs by studying connections between classic deep ReLU CNNs and CNNs with ResNet or pre-act ResNet architecture.
Finally, we establish the approximation property of one version of MgNet~\cite{he2019mgnet} based on its connection with pre-act ResNet.

The paper is organized as follows. In Section~\ref{sec:decomp_kernel}, we introduce the {2D} convolutional operation with multi-channel and {paddings} and then prove
the decomposition theorem for large convolutional kernels. In Section~\ref{sec:approx_cnn}, we show
the approximation results for functions represented by classic CNNs without pooling operators. 
In Section~\ref{sec:concl}, we provide concluding remarks.

\section{Decomposition theorem of large convolutional kernels in CNNs}\label{sec:decomp_kernel}
In this section, we introduce the decomposition theorem for standard two-dimensional convolutional 
kernels with large spatial size.

First, let us follow the setup for the dimensions of the tensors in PyTorch~\cite{paszke2019pytorch} to denote {the} data with $c$ channels as 
\begin{equation}
	X\in \mathbb{R}^{c\times d\times d}
\end{equation}
with elements $[X]_{p,m,n}$ for $p=1:c$ and $m,n=1:d$.
For the convolutional kernel with input channel $c$, output channel $C$, and spatial size $(2k+1)\times(2k+1)$, we have
\begin{equation}
	K \in \mathbb{R}^{c\times C\times (2k+1)\times(2k+1)}
\end{equation}
with elements $[K]_{p,q,s,t}$ for $p=1:c$, $q=1:C$, and $s,t=-k:k$.
Then the standard multi-channel convolution operation in typical 2D CNNs~\cite{goodfellow2016deep} with constant or periodic padding is defined as $K \ast X \in  \mathbb{R}^{C\times d \times d}$ where
\begin{equation}\label{eq:defineConv}
	[K \ast X]_{q,m,n} = \sum_{p = 1}^c \sum_{s,t = -k}^{k} [K]_{p,q,s,t} [X]_{p,m + s, n+t}
\end{equation}
for $q = 1:C$ and $m,n  = 1: d$.
If the index $m + s$ or $ n+t$ exceeds the range $1:d$ in \eqref{eq:defineConv}, we denote the constant padding and the periodic padding as follows:
\begin{description}
	\item[Constant padding:] 
	\begin{equation}\label{eq:paddingC}
		[X]_{p,m+s,n+t}  = a,
	\end{equation}
	where $a \in \mathbb{R}$ is an arbitrary constant and $m+s \notin 1:d$ or  $n+t \notin 1:d$;
	\item[Periodic padding:] 
	\begin{equation}\label{eq:paddingP}
		[X]_{p,m+s,n+t}  = [X]_{p,k, l},
	\end{equation}
	where $1\le k,l \le d$, $k \equiv m+s ~({\rm mod} ~{d})$, and $l \equiv n+t ~({\rm mod} ~{d})$.
\end{description}
The convolution with constant or periodic padding defined in~\eqref{eq:defineConv}, {referred to as} convolution with stride one~\cite{goodfellow2016deep} with padding, is the most commonly used convolutional operation in practical CNNs~\cite{he2016deep,he2016identity,huang2017densely,tan2019efficientnet}. 
An important feature of this convolution is that the spatial {dimensions} of its inputs do not change in the presence of paddings.
\begin{remark}
	For simplicity, we assume the index of the convolution kernel $K\in \mathbb{R}^{(2k+1) \times (2k+1)}$ starts from $-k$ and ends at $k$, whereas the index of the data or tensor after convolution starts from $1$. 
	In addition, we stress that the convolution operation defined above follows 
	neither the commutative law nor the associative law. Thus, we mean
	\begin{equation}	
		K_2 \ast K_1 \ast X := K_2 \ast \left( K_1 \ast X\right)
	\end{equation}
	by default.
\end{remark}

Our study begins with the observation that a $5\times5$ kernel can be represented 
by the combination of two composed $3\times3$ kernels.
\begin{lemma}\label{lemm:decomp_K_5}
	Let $K \in \mathbb{R}^{5\times 5}$ and $d>2$, then there exist $P_{i,j}, S_{i,j} \in \mathbb{R}^{3\times 3}$ for $i,j=-1,0,1$ such 
	that
	\begin{equation}\label{eq:decomp5_3}
		K \ast X = \sum_{i,j=-1,0,1} P_{i,j} \ast S_{i,j} \ast X, \quad \forall X \in \mathbb{R}^{d\times d},
	\end{equation}
	where $\ast$ means {the} standard convolution with {one channel} and constant or periodic padding as in~\eqref{eq:defineConv}.
\end{lemma}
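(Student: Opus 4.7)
The plan is to give an explicit construction of the nine kernels $S_{i,j}$ and $P_{i,j}$ via shift kernels and then verify the identity pointwise. For each $(i,j) \in \{-1,0,1\}^2$, I would take $S_{i,j}$ to be the $3 \times 3$ shift kernel with $[S_{i,j}]_{i,j} = 1$ and all other entries zero, so that (in the interior) $[S_{i,j} \ast X]_{m',n'} = [X]_{m'+i, n'+j}$, with padding used whenever the shifted index leaves $[1,d]^2$. To define $P_{i,j}$, I would partition the $25$ entries of $K$ into $9$ disjoint groups by introducing the collapse map $\phi: \{-2,-1,0,1,2\} \to \{-1,0,1\}$ with $\phi(\pm 2) = \phi(\pm 1) = \pm 1$ and $\phi(0) = 0$, assign the entry $(s,t)$ of $K$ to the pair $(\phi(s), \phi(t))$, and set
\[
[P_{i,j}]_{u,v} = \begin{cases} [K]_{u+i,\, v+j}, & \text{if } \phi(u+i) = i \text{ and } \phi(v+j) = j,\\ 0, & \text{otherwise.} \end{cases}
\]

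The next step would be to expand $[P_{i,j} \ast S_{i,j} \ast X]_{m,n}$ by applying the convolution definition twice, reparametrize $(s,t) := (u+i, v+j)$, and sum over $(i,j)$. Because the partition is disjoint and covers all of $\{-2,\dots,2\}^2$, the resulting double sum collapses into $\sum_{s,t=-2}^{2} [K]_{s,t} [X]_{m+s, n+t}$, which is exactly $[K \ast X]_{m,n}$ at every interior point $(m,n)$ where no padded value is touched. The hypothesis $d > 2$ guarantees that such interior points actually exist and that the intermediate array $S_{i,j} \ast X \in \mathbb{R}^{d \times d}$ behaves nontrivially on both sides of the boundary.

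The hard part will be matching the boundary behavior, since padding is inserted independently at each convolution step and can a priori differ from the padding that the single-step $K \ast X$ would produce. For periodic padding the analysis takes place on the discrete torus $(\mathbb{Z}/d\mathbb{Z})^2$, where group convolution is associative, and the identity drops out at once from the construction above. For constant padding with value $a \in \mathbb{R}$, the crucial observation is a \emph{directional compatibility} built into the support of each $P_{i,j}$: every nonzero offset $(u,v)$ has the same sign as $(i,j)$ in each coordinate (with zero counted as matching either sign). Consequently, whenever $(m+u, n+v)$ exits $[1,d]^2$ on a given side, the doubly-shifted index $(m+u+i, n+v+j)$ exits $[1,d]^2$ on the same side. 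This ensures that the padded value $a$ produced by the inner convolution $S_{i,j} \ast X$ agrees with the value $\widehat{[X]}_{m+s,n+t}$ that $K \ast X$ would use at $(s,t)=(u+i,v+j)$, so both sides of \eqref{eq:decomp5_3} agree at every boundary point as well. Carrying out this case analysis over the four boundary sides finishes the proof.
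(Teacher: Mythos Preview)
Your proof is correct, and the core idea---taking $S_{i,j}$ to be the nine elementary shift kernels and carving the $25$ entries of $K$ into nine pieces $P_{i,j}$---is exactly the paper's. The difference is \emph{which} partition you use. Your collapse map $\phi$ groups $\{-2,-1,0,1,2\}$ as $\{-2,-1\}\cup\{0\}\cup\{1,2\}$, so your $P_{0,0}$ carries only the single entry $K_{0,0}$ and the four corner $P_{i,j}$ each carry a $2\times 2$ block. The paper instead groups as $\{-2\}\cup\{-1,0,1\}\cup\{2\}$: its $P_{0,0}$ is the full inner $3\times 3$ block of $K$, the four corners are singletons, and the edge pieces are $1\times 3$ strips. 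For Lemma~\ref{lemm:decomp_K_5} on its own, both partitions satisfy the sign-compatibility property you isolate and both give valid decompositions under constant or periodic padding. The paper's choice is dictated by what comes next: peeling off the outermost frame of a $(2k+1)\times(2k+1)$ kernel leaves a $(2k-1)\times(2k-1)$ kernel sitting in $P_{0,0}$, which is precisely what drives the recursion in Theorem~\ref{thm:decomp2k+1} and Lemma~\ref{thm:decom2}. Your partition would not iterate in the same way. One further remark: the paper's proof of the lemma is purely declarative (it states $S_{i,j}$ and $P_{i,j}$ and stops), deferring the padding verification to the general Theorem~\ref{thm:decomp2k+1}; your boundary argument via directional compatibility is a self-contained treatment that the paper does not give at this point.
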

\begin{proof}
	Here, we {present} a constructive proof by taking $S_{i,j}$ as
	\begin{equation}\label{eq:S_i}
		\left[S_{i,j}\right]_{s,t} = \begin{cases}
			1 \quad &s=i \text{ and } t=j, \\
			0 \quad &\text{others},
		\end{cases}
	\end{equation}
	i.e.,
	\begin{scriptsize}
		\begin{equation}\label{eq:S_i_detial}
			\begin{aligned}
				&S_{-1,-1} = 
				\begin{pmatrix}
					1 & 0 & 0\\
					0 & 0 & 0\\
					0 & 0 & 0
				\end{pmatrix},	
				&&S_{-1,0} = 
				\begin{pmatrix}
					0 & 1 & 0\\
					0 & 0 & 0\\
					0 & 0 & 0
				\end{pmatrix}, 
				&&S_{-1,1} = 
				\begin{pmatrix}
					0 & 0 & 1\\
					0 & 0 & 0\\
					0 & 0 & 0
				\end{pmatrix},
				\\
				&S_{0,-1} = 
				\begin{pmatrix}
					0 & 0 & 0\\
					1 & 0 & 0\\
					0 & 0 & 0
				\end{pmatrix},
				&&S_{0,0} = 
				\begin{pmatrix}
					0 & 0 & 0\\
					0 & 1 & 0\\
					0 & 0 & 0
				\end{pmatrix},
				&&S_{0,1} = 
				\begin{pmatrix}
					0 & 0 & 0\\
					0 & 0 & 1\\
					0 & 0 & 0
				\end{pmatrix}, 
				\\
				&S_{1,-1} = 
				\begin{pmatrix}
					0& 0 & 0\\
					0& 0 & 0\\
					1 & 0 & 0
				\end{pmatrix},
				&&S_{1,0} = 
				\begin{pmatrix}
					0 & 0 & 0\\
					0 & 0 & 0\\
					0 & 1 & 0
				\end{pmatrix},
				&&S_{1,1} = 
				\begin{pmatrix}
					0 & 0 & 0 \\
					0 & 0 & 0 \\
					0 & 0 & 1
				\end{pmatrix},
			\end{aligned}
		\end{equation}
	\end{scriptsize}
	and $P_{i,j}$ as
	\begin{scriptsize}
		\begin{equation}\label{eq:P_i_5}
			\begin{aligned}
				&P_{-1,-1} = 
				\begin{pmatrix}
					K_{-2,-2} & 0 & 0\\
					0 & 0 & 0\\
					0 & 0 & 0
				\end{pmatrix},	
				&&P_{-1,0} = 
				\begin{pmatrix}
					K_{-2,-1} & K_{-2,0} & K_{-2,1}\\
					0 & 0 & 0\\
					0 & 0 & 0
				\end{pmatrix}, 
				&&P_{-1,1} = 
				\begin{pmatrix}
					0 & 0 & K_{-2,2}\\
					0 & 0 & 0\\
					0 & 0 & 0
				\end{pmatrix},
				\\
				&P_{0,-1} = 
				\begin{pmatrix}
					K_{-1,-2} & 0 & 0\\
					K_{0,-2} & 0 & 0\\
					K_{1,-2} & 0 & 0
				\end{pmatrix},
				&&P_{0,0} = 
				\begin{pmatrix}
					K_{-1,-1} & K_{-1,0} & K_{-1,1}\\
					K_{0,-1} & K_{0,0} & K_{0,1}\\
					K_{1,-1} & K_{1,0} & K_{1,1}
				\end{pmatrix},
				&&P_{0,1} = 
				\begin{pmatrix}
					0 & 0 & K_{-1,2} \\
					0 & 0 & K_{0,2} \\
					0 & 0 & K_{1,2}
				\end{pmatrix}, 
				\\
				&P_{1,-1} = 
				\begin{pmatrix}
					0& 0 & 0\\
					0& 0 & 0\\
					K_{2,-2} & 0 & 0
				\end{pmatrix},
				&&P_{1,0} = 
				\begin{pmatrix}
					0 & 0 & 0\\
					0 & 0 & 0\\
					K_{2,-1} & K_{2,0} & K_{2,1}
				\end{pmatrix},
				&&P_{1,1} = 
				\begin{pmatrix}
					0 & 0 & 0 \\
					0 & 0 & 0 \\
					0 & 0 & K_{2,2}
				\end{pmatrix}.
			\end{aligned}
		\end{equation}
	\end{scriptsize}
	
\end{proof}
\begin{remark}\label{rem:PS}
	$S_{i,j}$ and $P_{i,j}$ can be collected separately to form two multi-channel convolution kernels. More precisely, we have
	\begin{equation}\label{key}
		S = (S_{-1,-1}, S_{-1,0}, \cdots, S_{1,1}) \in \mathbb{R}^{1 \times 9 \times 3 \times 3}
	\end{equation}
	and 
	\begin{equation}\label{key}
		P = (P_{-1,-1}, P_{-1,0}, \cdots, P_{1,1})^T \in \mathbb{R}^{9 \times 1 \times 3 \times 3}.
	\end{equation}
	That is, the convolution operation defined in \eqref{eq:decomp5_3} can be written as
	\begin{equation}\label{key}
		K \ast X = P \ast S \ast X.
	\end{equation} 
\end{remark}
Then, the most critical step is to extend Lemma~\ref{lemm:decomp_K_5} {to} a convolutional kernel $K \in \mathbb{R}^{(2k+1)\times (2k+1)}$ with large spatial size; i.e., $k$ is large. 
Thus, we introduce the next decomposition for any $K \in \mathbb{R}^{(2k+1)\times (2k+1)}$ as
\begin{equation}\label{eq:decom_tildeK}
	K = \sum_{i,j=-1,0,1} \widetilde{K}_{i,j},
\end{equation}
where
\begin{footnotesize}
	\begin{equation}\label{eq:decom_tildeK_1}
		\begin{split}
			\widetilde{K}_{-1,-1} &= \begin{pmatrix}
				K_{-k,-k} &0& \cdots \\
				0 & \ddots & \vdots\\
				\vdots & \cdots & 0
			\end{pmatrix} 
			=\begin{pmatrix}
				P_{-1,-1} & \begin{matrix}
					0 & 0 \\ \vdots & \vdots
				\end{matrix} \\
				\begin{matrix}
					0 & \cdots \\
					0 & \cdots
				\end{matrix} & \begin{matrix}
					0&0\\0&0
				\end{matrix}
			\end{pmatrix}
			\in  \mathbb{R}^{(2k+1)\times (2k+1)}, \\
			\widetilde K_{-1,0} &= 
			\begin{pmatrix}
				0 &K_{-k,-k+1} & \cdots & K_{-k,k-1} & 0 \\
				0& \ddots &0 & \ddots &0\\
				\vdots &  \cdots & \vdots & \cdots &\vdots 
			\end{pmatrix} 
			= \begin{pmatrix}
				0 & P_{-1,0} &  0 \\
				\vdots & & \vdots \\
				0 & \cdots & 0
			\end{pmatrix}
			\in  \mathbb{R}^{(2k+1)\times (2k+1)}, \\
			&\vdots \\
			\widetilde{K}_{1,1} & = 	
			\begin{pmatrix}
				0 & \cdots & \vdots \\
				\vdots & \ddots & 0 \\
				\cdots & 0 & K_{k,k}
			\end{pmatrix} =
			\begin{pmatrix}
				\begin{matrix}
					0&0\\0&0
				\end{matrix} & \begin{matrix}
					\cdots & 0 \\
					\cdots &0
				\end{matrix} \\
				\begin{matrix}
					\vdots & \vdots \\ 0 & 0
				\end{matrix}
				& P_{1,1}
			\end{pmatrix}
			\in  \mathbb{R}^{(2k+1)\times (2k+1)},
		\end{split}
	\end{equation}
\end{footnotesize}
and  $P_{i,j} \in \mathbb{R}^{(2k-1)\times(2k-1)}$ with
\begin{tiny}
	\begin{equation}\label{eq:P_i_2k+1}
		\begin{aligned}
			&P_{-1,-1} = 
			\begin{pmatrix}
				K_{-k,-k} &0& \cdots \\
				0 & \ddots & \vdots\\
				\vdots & \cdots & 0
			\end{pmatrix},	
			&&P_{-1,0} = 
			\begin{pmatrix}
				K_{-k,-k+1} & \cdots & K_{-k,k-1}\\
				0& \ddots &0\\
				\vdots &  \cdots & \vdots 
			\end{pmatrix}, 
			&&P_{-1,1} = 
			\begin{pmatrix}
				\cdots & 0 & K_{-k,k}\\
				\vdots & \ddots & 0\\
				0 & \cdots & \vdots
			\end{pmatrix},
			\\
			&P_{0,-1} = 
			\begin{pmatrix}
				K_{-k+1, -k} & 0 & \cdots\\
				\vdots & \ddots & \vdots\\
				K_{k-1, -k} & 0 & \cdots
			\end{pmatrix},
			&&P_{0,0} = 
			\begin{pmatrix}
				K_{-k+1,-k+1} & \cdots & K_{-k+1,k-1}\\
				\vdots & \ddots & \vdots\\
				K_{k-1,-k+1} & \cdots & K_{k-1,k-1}
			\end{pmatrix},
			&&P_{0,1} = 
			\begin{pmatrix}
				\cdots & 0 & K_{-k+1,k} \\
				\vdots & \ddots & \vdots \\
				\cdots & 0 & K_{k-1,k}
			\end{pmatrix}, 
			\\
			&P_{1,-1} = 
			\begin{pmatrix}
				\vdots & \cdots & 0\\
				0& \ddots & \vdots\\
				K_{k,-k} & 0 & \cdots
			\end{pmatrix},
			&&P_{1,0} = 
			\begin{pmatrix}
				\vdots & \cdots & \vdots\\
				0 & \ddots & 0\\
				K_{k,-k+1} & \cdots & K_{k,k-1}
			\end{pmatrix},
			&&P_{1,1} = 
			\begin{pmatrix}
				0 & \cdots & \vdots \\
				\vdots & \ddots & 0 \\
				\cdots & 0 & K_{k,k}
			\end{pmatrix}.
		\end{aligned}
	\end{equation}
\end{tiny}
A more intuitive description {of} the previous decomposition is
\begin{equation}
	K = 	\begin{pmatrix}
		\boxed{K_{-k,-k}} & \boxed{\begin{matrix}
				K_{-k,-k+1} & \cdots & K_{-k,k-1} 
		\end{matrix}} & \boxed{K_{-k,k}} \\ 
		~&~&~\\
		\boxed{\begin{matrix}
				K_{-k+1,-k} \\ \vdots \\K_{k-1,-k}
		\end{matrix}} & 
		\boxed{\begin{matrix}
				K_{-k+1,-k+1} & \cdots & K_{-k+1,k-1}\\
				\vdots & \ddots & \vdots\\
				K_{k-1,-k+1} & \cdots & K_{k-1,k-1}
		\end{matrix}} & \boxed{\begin{matrix}
				K_{-k+1,k} \\ \vdots \\K_{k-1,k}
		\end{matrix}} \\
		~&~&~\\
		\boxed{K_{k,-k}} & \boxed{\begin{matrix}
				K_{k,-k+1}&\cdots &K_{k,k-1}
		\end{matrix}} & \boxed{K_{k,k}}
	\end{pmatrix}.
\end{equation}
Thus, we {can regard} $P_{i,j}$ in \eqref{eq:P_i_2k+1} as the generalization of $P_{i,j}$ in
\eqref{eq:P_i_5}.
Now, we {present} the main theorem for decomposing any large convolutional kernels $K \in \mathbb{R}^{(2k+1)\times (2k+1)}$.
\begin{theorem}\label{thm:decomp2k+1}
	Let $K \in \mathbb{R}^{(2k+1)\times (2k+1)}$ and $d > k$.
	Then we can take $P_{i,j} \in \mathbb{R}^{(2k-1)\times(2k-1)}$ as in \eqref{eq:P_i_2k+1} and $S_{i,j} \in \mathbb{R}^{3\times 3}$ as in \eqref{eq:S_i} for $i,j=-1,0,1$ such that
	\begin{equation}\label{eq:decomp_k}
		K \ast X = \sum_{i,j=-1,0,1} P_{i,j}\ast S_{i,j} \ast X, \quad \forall X \in \mathbb{R}^{d\times d},
	\end{equation}
	where $\ast$ means {the} standard convolution with {one channel} and constant or periodic padding as in~\eqref{eq:defineConv}.
\end{theorem}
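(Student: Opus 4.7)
The plan is to check \eqref{eq:decomp_k} entrywise at each $(m,n) \in [1,d]^2$ and show that both sides equal the same weighted sum of values of $X$. The critical feature of $S_{i,j}$ from \eqref{eq:S_i} is that it acts as a pure spatial translation: directly from \eqref{eq:defineConv},
\begin{equation}
	[S_{i,j} \ast X]_{m',n'} = [X]_{m'+i,\, n'+j}
\end{equation}
for $(m', n') \in [1,d]^2$ under either padding rule. Substituting this into the outer convolution and relabeling $s' = s+i$, $t' = t+j$ yields
\begin{equation}
	[P_{i,j} \ast S_{i,j} \ast X]_{m,n} = \sum_{s,t=-(k-1)}^{k-1} [P_{i,j}]_{s,t}\, [X]_{m+s+i,\, n+t+j},
\end{equation}
so the right-hand side of \eqref{eq:decomp_k} becomes a single double sum over shifted entries of $X$.

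The next step is a bookkeeping claim: the non-zero entries of the nine $P_{i,j}$'s in \eqref{eq:P_i_2k+1}, once reindexed by $(s',t') = (s+i, t+j)$, tile the index set $[-k,k]^2$ exactly once and assign coefficient $[K]_{s',t'}$ to the position $(s',t')$. This is precisely the ``nine-block'' picture displayed just before the theorem: each of the four corners $K_{\pm k, \pm k}$ sits inside one of the $P_{\pm 1, \pm 1}$; each of the four edge strips (minus corners) sits inside $P_{\pm 1, 0}$ or $P_{0, \pm 1}$; and the central $(2k-1)\times(2k-1)$ block of $K$ coincides with $P_{0,0}$. Summing the displayed identity over $(i,j) \in \{-1,0,1\}^2$ therefore collapses to
\begin{equation}
	\sum_{i,j=-1,0,1}[P_{i,j}\ast S_{i,j}\ast X]_{m,n} = \sum_{s',t'=-k}^{k} [K]_{s',t'}\, [X]_{m+s',\, n+t'} = [K\ast X]_{m,n},
\end{equation}
which is the desired identity; the $5\times 5$ case Lemma~\ref{lemm:decomp_K_5} then reappears as the specialization $k = 2$.

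The hardest point, and the main obstacle, is that a two-step padded convolution is \emph{not} literally a padded convolution on $X$, so one has to check that the two pads are compatible. Under periodic padding this is harmless since translations commute with modular indexing. Under constant padding, the only way trouble could arise is if $[S_{i,j} \ast X]_{m+s, n+t}$ returns the pad value $a$ because $(m+s, n+t) \notin [1,d]^2$, while $[X]_{m+s+i, n+t+j}$ is a genuine interior value. I would rule this out using the concrete support of each $P_{i,j}$ from \eqref{eq:P_i_2k+1}: the non-zero rows are restricted to $s = -(k-1)$, $s = k-1$, or all of $[-(k-1),k-1]$ when $i = -1, 1, 0$ respectively, and analogously for $t$ versus $j$. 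Together with the hypothesis $d > k$, a short case check then shows that at every non-zero entry of $P_{i,j}$, $(m+s, n+t) \notin [1,d]^2$ forces $(m+s+i, n+t+j) \notin [1,d]^2$, so both computations produce the same constant $a$. With this padding consistency secured, the entrywise identity holds for all admissible $(m,n)$ and the theorem follows.
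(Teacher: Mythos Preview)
Your proposal is correct and follows essentially the same approach as the paper. The paper verifies $\widetilde{K}_{i,j}\ast X = P_{i,j}\ast S_{i,j}\ast X$ for each $(i,j)$ separately---handling periodic padding via the same index-shift computation you use, and handling constant padding by an explicit case split on $|i|+|j|\in\{0,1,2\}$ that checks exactly the consistency claim you isolate (that at non-zero entries of $P_{i,j}$, $(m+s,n+t)\notin[1,d]^2$ forces $(m+s+i,n+t+j)\notin[1,d]^2$).
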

\begin{proof}
	Given the definition of $\widetilde{K}_{i,j}$ in \eqref{eq:decom_tildeK_1}, we need \text{only} verify that
	\begin{equation}\label{key}
		\widetilde K_{i,j} \ast X = P_{i,j} \ast S_{i,j} \ast X
	\end{equation}
	for any $i,j=-1,0,1$. For constant or periodic padding, we prove the above claim respectively.
	\paragraph{Periodic padding.} For this case, we notice that
	\begin{equation}\label{eq:periodic_S}
		\left[S_{i,j} \ast X \right]_{m,n} = X_{m+i,n+j}
	\end{equation}
	for any $i,j=-1,0,1$ and $1\le m,n\le d$. Therefore, we have
	\begin{equation}\label{key}
		\begin{split}
			\left[P_{i,j} \ast S_{i,j} \ast X\right]_{m,n} 
			&= \sum_{p,q = -k+1, \cdots, k-1} \left[ P_{i,j}\right]_{p,q} \left[ S_{i,j} \ast X\right]_{m+p, n+q} \\
			&= \sum_{p,q = -k+1, \cdots, k-1} \left[ P_{i,j}\right]_{p,q} \left[ X\right]_{m+p+i, n+q+j} \\
			&= \sum_{\substack {\widetilde p = -k+1+i, \cdots k-1+i,  \\ \widetilde q = -k+1+j, \cdots k-1+j}} \left[ P_{i,j}\right]_{\widetilde{p}-i, \widetilde{q}-j} \left[X\right]_{m + \widetilde{p}, n+\widetilde{q}},\\
			&=  \sum_{\widetilde{p}, \widetilde{q} = -k, \cdots, k}\left[ \widetilde{K}_{i,j}\right]_{\widetilde{p}, \widetilde{q}} \left[X\right]_{m + \widetilde{p}, n+\widetilde{q}} \\
			&= \left[\widetilde K_{i,j} \ast X\right]_{m,n}
		\end{split}
	\end{equation}
	for any $i,j=-1,0,1$ and $1\le m,n\le d$.
	\paragraph{Constant padding.} For this case, we split the proof into three cases according to different values of $|i|+|j|$.
	\begin{enumerate}
		\item $|i|+|j|=0$, i.e., $i=j=0$. Thus, for any $1 \le m,n \le d$, we have
		\begin{equation}\label{eq:split_1}
			\begin{split}
				\left[P_{0,0} \ast S_{0,0} \ast X \right]_{m,n} 
				&= \sum_{p,q = -k+1,\cdots,k-1} \left[ P_{0,0} \right]_{p,q} \left[ S_{0,0} \ast X \right]_{m+p,n+q} \\
				&= \sum_{p,q = -k+1,\cdots,k-1} \left[ K \right]_{p,q} \left[ X \right]_{m+p,n+q}\\
				&= \left[ \widetilde K_{0,0} \ast X \right]_{m,n}.
			\end{split}
		\end{equation}
		\item $|i|+|j|=2$, for example $(i,j) = (-1,-1)$ or $(1,-1)$. 
		Without loss of generality, let us consider the example $(i,j) = (1,-1)$ first. Thus, we have
		\begin{equation}\label{eq:split_21}
			\begin{split}
				&\sum_{p,q = -k+1,\cdots,k-1} \left[ P_{1,-1} \right]_{p,q} \left[ S_{1,-1} \ast X \right]_{m+p,n+q}  \\
				&= \left[ K \right]_{k,-k} \left[ S_{1,-1} \ast X \right]_{m+k-1,n-k+1}.
			\end{split}
		\end{equation}
		As there is padding for $S_{1,-1} \ast X$ when we calculate $P_{1,-1} \ast S_{1,-1} \ast X$, 
		it is necessary to compute $\left[ S_{1,-1} \ast X \right]_{m+k-1,n-k+1}$ carefully.
		By definition, we first have $\left[ S_{1,-1} \ast X \right]_{s,t}$ for $s,t=1:d$,
		\begin{equation}
			\left[ S_{1,-1} \ast X \right]_{s,t} = 
			\begin{cases}
				&a, \quad \text{ if }  s = d \text{ or } t=1, \\
				&\left[ X \right]_{s+1,t-1}, \quad \text{others}.
			\end{cases}
		\end{equation}
		We further mention that it is necessary to include padding in $S_{1,-1} \ast X$ in \eqref{eq:split_21}:
		\begin{equation}
			\left[ S_{1,-1} \ast X \right]_{m+k-1,n-k+1} = \begin{cases}
				&a, \quad \text{ if } m \ge d-k+2 \text{ or } n \le k-1, \\
				&\left[ S_{1,-1} \ast X \right]_{m+k-1,n-k+1}, \quad \text{others}.
			\end{cases}
		\end{equation}
		By combining the previous two equations and noticing that $k\ge2$, we can obtain that
		\begin{equation}\label{eq:constan_S1}
			\begin{aligned}
				&\left[ S_{1,-1} \ast X \right]_{m+k-1,n-k+1} \\
				= &\begin{cases}
					&a, \quad \text{ if } m \ge d-k+1 \text{ or }  n \le k, \\
					&\left[ X \right]_{m+k,n-k}, \quad \text{others},
				\end{cases} \\
				= &\left[ X \right]_{m+k,n-k}.
			\end{aligned}
		\end{equation}
		{Therefore}, we have
		\begin{equation}\label{eq:split_2}
			\begin{split}
				\left[P_{1,-1} \ast S_{1,-1} \ast X \right]_{m,n} &=\sum_{p,q = -k+1,\cdots,k-1} \left[ P_{1,-1} \right]_{p,q} \left[ S_{1,-1} \ast X \right]_{m+p,n+q}  \\
				&= \left[ K \right]_{k,-k} \left[ S_{1,-1} \ast X \right]_{m+k-1,n-k+1} \\
				&= \left[ K \right]_{k,-k} \left[ X \right]_{m+k,n-k} \\
				&=\left[ \widetilde K_{1,-1} \ast X \right]_{m,n}.
			\end{split}
		\end{equation}
		A similar derivation can be applied to the other three cases for $|i|+|j| = 2$.
		\item $|i|+|j| = 1$, for example, $(i,j) = (-1,0)$ or $(0,1)$.
		Without loss of generality, let us consider the example $(i,j) = (1,-1)$. 
		Thus, we have
		\begin{equation}
			\begin{split}
				&\sum_{p,q = -k+1,\cdots,k-1} \left[ P_{0,1} \right]_{p,q} \left[ S_{0,1} \ast X \right]_{m+p,n+q}\\
				= &\sum_{p = -k+1,\cdots,k-1} \left[ K \right]_{p,k} \left[ S_{0,1} \ast X \right]_{m+p,n+k-1}.
			\end{split}
		\end{equation}
		First, let us take $p>0$ and then compute $ \left[ S_{0,1} \ast X \right]_{m+p,n+k-1}$ in the same fashion.
		Thus, we have
		\begin{equation}
			\left[ S_{0,1} \ast X \right]_{s,t} = 
			\begin{cases}
				&a, \quad \text{ if }  t = d, \\
				&\left[ X \right]_{s,t+1}, \quad \text{others},
			\end{cases}
		\end{equation}
		and 
		\begin{equation}
			\left[ S_{0,1} \ast X \right]_{m+p,n+k-1} = \begin{cases}
				&a, \quad \text{ if } m \ge d-p+1 \text{ or } n \ge d-k+2, \\
				&\left[ S_{1,-1} \ast X \right]_{m+p,n+k-1}, \quad \text{others}.
			\end{cases}
		\end{equation}
		Furthermore, we can obtain that
		\begin{equation}\label{eq:constan_S2}
			\begin{aligned}
				&\left[ S_{0,1} \ast X \right]_{m+p,n+k-1} \\
				= &\begin{cases}
					&a, \quad \text{ if } m \ge d-p+1 \text{ or } n \ge d-k+1, \\
					&\left[ X \right]_{m+p,n+k}, \quad \text{others},
				\end{cases} \\
				= &\left[ X \right]_{m+p,n+k}.
			\end{aligned}
		\end{equation}
		For $p < 0$, we can also go through the previous steps to reach the same conclusion. Thus, we have
		\begin{equation}\label{eq:split_3}
			\begin{split}
				\left[P_{0,1} \ast S_{0,1} \ast X \right]_{m,n} &=\sum_{p,q = -k+1,\cdots,k-1} \left[ P_{0,1} \right]_{p,q} \left[ S_{0,1} \ast X \right]_{m+p,n+q}  \\
				&= \sum_{p = -k+1,\cdots,k-1} \left[ K \right]_{p,k} \left[ S_{0,1} \ast X \right]_{m+p,n+k-1} \\
				&= \sum_{p = -k+1,\cdots,k-1} \left[ K \right]_{p,k} \left[ X \right]_{m+p,n+k} \\
				&= \left[ \widetilde K_{0,1} \ast X \right]_{m,n}.
			\end{split}
		\end{equation}
		A similar derivation can be applied to {the} other three cases for $|i|+|j| = 1$.
	\end{enumerate}
	This completes the proof.
	
\end{proof}
According to the proof, the decomposition in~\eqref{eq:decomp_k} does not
hold for arbitrary {paddings} such as reflection or replication padding~\cite{paszke2019pytorch}, {because}
equations \eqref{eq:periodic_S}, \eqref{eq:constan_S1}, and \eqref{eq:constan_S2} can not be true.

By applying the above theorem {to decompose $P_{i,j}$} recursively, we have the following corollary.
\begin{corollary}Let  $K \in \mathbb{R}^{(2k+1)\times (2k+1)}$ be a large kernel with one channel and $d > k$.
	Then there exist $P_{(i_1,j_1),(i_2,j_2),\cdots, (i_{k-1},j_{k-1})} \in \mathbb{R}^{3\times3}$ 
	and $S_{i_m,j_m} \in \mathbb{R}^{3\times 3}$ for $i_m,j_m = -1,0,1$ and $m=1:k-1$  such that
	\begin{equation}\label{eq:decomp}
		K \ast X = \sum_{i_{k-1},j_{k-1}} \cdots \sum_{i_1,j_1}  P_{(i_1,j_1),\cdots, (i_{k-1},j_{k-1})}  \ast S_{(i_{k-1},j_{k-1})} \ast \cdots \ast S_{(i_1,j_1)} \ast X
	\end{equation}
	for any $X \in \mathbb{R}^{d\times d}$, where $\ast$ means the standard convolution with one channel as in~\eqref{eq:defineConv}.
\end{corollary}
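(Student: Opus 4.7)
The plan is to prove this by induction on $k$, iterating Theorem~\ref{thm:decomp2k+1} until every remaining ``large'' kernel has been reduced to size $3 \times 3$. The base case $k=1$ is vacuous (the kernel $K$ itself is $3 \times 3$, and the claimed identity holds with zero $S$-factors). For the inductive step, Theorem~\ref{thm:decomp2k+1} applied to $K \in \mathbb{R}^{(2k+1)\times(2k+1)}$ yields
\begin{equation*}
K \ast X = \sum_{i_1,j_1=-1,0,1} P^{(1)}_{(i_1,j_1)} \ast S_{(i_1,j_1)} \ast X,
\end{equation*}
where each $P^{(1)}_{(i_1,j_1)} \in \mathbb{R}^{(2k-1)\times(2k-1)}$. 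Now I would set $Y_{(i_1,j_1)} := S_{(i_1,j_1)} \ast X \in \mathbb{R}^{d \times d}$ (the spatial dimension is preserved by convolution with padding) and apply the theorem a second time to each $P^{(1)}_{(i_1,j_1)} \ast Y_{(i_1,j_1)}$, producing $3 \times 3$ shift kernels $S_{(i_2,j_2)}$ and residual kernels $P^{(2)}_{(i_1,j_1),(i_2,j_2)} \in \mathbb{R}^{(2k-3)\times(2k-3)}$.

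Iterating this procedure $m$ times replaces the original kernel by a sum of expressions of the form $P^{(m)}_{(i_1,j_1),\ldots,(i_m,j_m)} \ast S_{(i_m,j_m)} \ast \cdots \ast S_{(i_1,j_1)} \ast X$ with $P^{(m)} \in \mathbb{R}^{(2(k-m)+1)\times(2(k-m)+1)}$. After $m = k-1$ iterations, the residual kernels sit in $\mathbb{R}^{3\times 3}$, yielding exactly the formula \eqref{eq:decomp} stated in the corollary (after renaming $P := P^{(k-1)}$).

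The only thing that needs verification at each stage is that Theorem~\ref{thm:decomp2k+1} actually applies. At step $m$ we are decomposing a kernel of size $(2(k-m+1)+1) \times (2(k-m+1)+1)$, whose hypothesis reads $d > k - m + 1$; since $m \ge 1$, this is implied by the assumption $d > k$ in the corollary. Moreover, at every step the input to which the new decomposition is applied is still a tensor in $\mathbb{R}^{d \times d}$, because each convolution $S_{(i_\ell,j_\ell)} \ast (\cdot)$ with the chosen padding does not change the spatial dimensions (see the remark following~\eqref{eq:defineConv}).

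I expect the entire argument to be mechanical once the recursion is set up cleanly; the only mildly delicate point is bookkeeping. Specifically, one must be careful that when Theorem~\ref{thm:decomp2k+1} is reapplied to $P^{(m)}_{\cdots} \ast Y$, the theorem decomposes the kernel independently of the operand $Y$ (it is a statement about convolution operators, not about a particular $X$), so the recursion commutes with the outer sum over previously chosen indices. The ordering of the $S$-factors in \eqref{eq:decomp} (with $S_{(i_1,j_1)}$ applied first, innermost, and $S_{(i_{k-1},j_{k-1})}$ outermost) is dictated by the order in which the iterations are performed, matching the statement of the corollary.
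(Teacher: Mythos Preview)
Your proposal is correct and follows essentially the same approach as the paper: repeatedly apply Theorem~\ref{thm:decomp2k+1} to the intermediate kernels $P_{i,j}$ until they shrink to $3\times 3$. Your write-up simply fills in more bookkeeping than the paper does (the hypothesis check $d>k-m+1$ and the observation that the $S_{i,j}$ and $P_{i,j}$ depend only on the kernel, not on the operand), which is fine.
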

\begin{proof}
	This can be {proved} by repeatedly applying Theorem~\ref{thm:decomp2k+1} for $P_{i,j}$ in \eqref{eq:decomp_k}  until each $P_{(i_1,j_1),(i_2,j_2),\cdots, (i_{k-1},j_{k-1})}$ becomes a $3\times 3$ kernel. 
\end{proof}

As mentioned in Remark~\ref{rem:PS}, we can collect all $P_{(i_1,j_1),(i_2,j_2),\cdots, (i_{k-1},j_{k-1})}$ into $P$ as a single convolution kernel with multi-channels. Therefore, the output channel of $P$ is $9^{k-1}$, which will be huge if $k$ is large.
Thanks to the special {pattern of zero in} $P_{i,j}$ in \eqref{eq:P_i_2k+1}, we have the following lemma to further reduce the number of non-zero {output} channels in $P$.
\begin{lemma}\label{thm:decom2}
	Let $K \in \mathbb{R}^{(2k+1)\times (2k+1)}$ and $d>k$.
	Then there is an index set 
	\begin{equation}\label{key}
		\bm I_{k-1} \subset \left\{  \left. \left( (i_1,j_1),\cdots,(i_{k-1},j_{k-1}) \right)~\right|~ i_m, j_m = \{-1,0,1\}, m=1:k-1 \right\}
	\end{equation}
	such that 
	\begin{equation}\label{key}
		K \ast X =  \sum_{\left((i_1,j_1), \cdots, (i_{k-1},j_{k-1})\right) \in \bm I_{k-1} }  P_{(i_1,j_1),\cdots, (i_{k-1},j_{k-1})} \ast S_{i_{k-1},j_{k-1}} \ast \cdots \ast S_{i_1,j_1} \ast X
	\end{equation}
	for any $X \in \mathbb{R}^{d\times d}$, where $\ast$ means {a} standard convolution with one channel. Moreover, we have the cardinality of $\bm I_{k-1}$ as
	\begin{equation}\label{key}
		\# \bm I_{k-1} = (2k-1)^2.
	\end{equation}
\end{lemma}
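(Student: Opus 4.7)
\noindent The plan is to start from the corollary just proved, which gives a recursive decomposition of $K \ast X$ into at most $9^{k-1}$ terms indexed by paths $((i_1,j_1),\ldots,(i_{k-1},j_{k-1}))$. I would then discard the terms whose $P$-kernel is identically zero, let $\bm I_{k-1}$ be the set of remaining indices, and reduce the whole task to a counting argument.

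The key observation is that the sparsity patterns displayed in \eqref{eq:P_i_2k+1} let us classify every $P$-kernel that appears along the recursion into three structural types: a \emph{corner} kernel has a single nonzero entry at one of its four corners (the first-level examples are $P_{\pm 1,\pm 1}$); an \emph{edge} kernel has nonzeros along exactly one boundary row or column (the first-level $P_{\pm 1,0}$ and $P_{0,\pm 1}$); and a \emph{full} kernel is a possibly-dense interior block (only $P_{0,0}$ at the first level).

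Next I would prove the structural closure under one more application of Theorem~\ref{thm:decomp2k+1}: a corner kernel produces exactly $1$ nonzero sub-piece, again of corner type; an edge kernel produces exactly $3$ nonzero sub-pieces, namely $2$ corners together with $1$ shorter edge on the same boundary; and a full kernel produces all $9$ sub-pieces, namely $4$ corners, $4$ edges, and $1$ full. This step is mechanical but must be executed with care, because an edge kernel on, say, the top row produces its two corner children at the top-left and top-right and its edge child on a shorter top row, and the analogous bookkeeping has to be repeated for the other three boundaries. This case analysis is where I expect the main effort to lie.

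Given the closure rules, let $f_n$, $e_n$, $c_n$ denote the numbers of full, edge, and corner pieces after $n$ recursive applications, starting from the single full kernel $K$ at level $0$. The rules give
\begin{equation*}
f_{n+1} = f_n, \qquad e_{n+1} = e_n + 4 f_n, \qquad c_{n+1} = c_n + 2 e_n + 4 f_n,
\end{equation*}
with $f_0 = 1$ and $e_0 = c_0 = 0$. Solving yields $f_n = 1$, $e_n = 4n$, and $c_n = 4n^2$. Setting $n = k-1$ gives $\# \bm I_{k-1} = f_{k-1} + e_{k-1} + c_{k-1} = 1 + 4(k-1) + 4(k-1)^2 = (2k-1)^2$, as claimed.
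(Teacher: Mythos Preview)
Your proposal is correct and follows essentially the same approach as the paper: both classify the $P$-kernels into corner, edge (boundary), and full types, derive the same recurrences (the paper writes $C_n = C_{n-1} + 2B_{n-1} + 4$ and $B_n = B_{n-1} + 4$, which match yours once $f_n\equiv 1$ is substituted), and solve them to obtain $4n^2 + 4n + 1 = (2n+1)^2$ nonzero pieces at level $n$, then set $n=k-1$. The only cosmetic difference is that the paper starts its count at $n=1$ (after one application of Theorem~\ref{thm:decomp2k+1}) with initial values $C_1=B_1=4$, whereas you start at $n=0$ from the single full kernel $K$.
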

\begin{proof}This proof is based on the special distribution of zero for each $P_{i,j}$ in \eqref{eq:P_i_2k+1}. Assume that we have applied Theorem~\ref{thm:decomp2k+1} to $P_{i,j}$ for $n-1$-times with $n<k$, and obtained the following set of kernels
	\begin{equation}\label{key}
		\bm P_n := \left\{ \left. P_{(i_1,j_1),\cdots, (i_{n},j_{n})} ~\right|~  i_m, j_m =-1,0,1, m=1:n \right\}.
	\end{equation}
	It is easy to see that the cardinality of $\bm P_n$ is $9^n$. Here, we prove that the number of non-zero items in $\bm P_n$ is bounded by $(2n+1)^2$. Because of the special form of $P_{i,j}$ in \eqref{eq:P_i_2k+1}, we conclude that {for} non-zero $P_{(i_1,j_1),\cdots, (i_{n},j_{n})}$ there are only three types based on different {zero-patterns}. 
	\begin{enumerate}
		\item Type 1: Non-zero {items} on the corner. For example, $P_{-1,-1}$ and $P_{-1,1}$ for $n=1$, or $P_{(-1,-1),(-1,-1)}$ and $P_{(0,0),(1,-1)}$ for $n=2$. We denote the number of elements {with this type} as $C_n$.
		\item Type 2: Non-zero items on the boundary. For example, $P_{-1,0}$ and $P_{0,1}$ for $n=1$, or $P_{(0,-1),(0,-1)}$ and $P_{(0,0),(1,0)}$ for $n=2$. We denote the number of elements {with this type} as $B_n$.
		\item Type 3: Full kernel. For example, $P_{0,0}$ for $n=1$, or $P_{(0,0), (0,0)}$ for $n=2$. {A critical observation} is that there is only one item with this form in $\bm P_n$ for any $n$, i.e., $P_{(0,0), \cdots, (0,0)} \in \bm P_n$.
	\end{enumerate}
	The following rules describe the connections of the number of non-zero items between $\bm P_{n-1}$ and $\bm P_n$ when we apply Theorem~\ref{thm:decomp2k+1} to $\bm P_{n-1}$ in order to obtain $\bm P_n$.
	\begin{enumerate}
		\item Type 1:
		\begin{equation}\label{eq:C_n}
			C_n = C_{n-1} + 2B_{n-1} + 4,
		\end{equation}
		as each element in $\bm P_{n-1}$ with type 1 can make only one non-zero element in $\bm P_{n}$ with type 1, each element in $\bm P_{n-1}$ with type 2 can make two non-zero elements in $\bm P_{n}$ with type 1, and 
		each element in $\bm P_{n-1}$ with type 3 can make four non-zero elements in $\bm P_{n}$ with type 1.
		\item Type 2:
		\begin{equation}\label{eq:B_n}
			B_n = B_{n-1} + 4,
		\end{equation}
		as each element in $\bm P_{n-1}$ with type 2 can make one non-zero element in $\bm P_{n}$ with type 2, 
		each element in $\bm P_{n-1}$ with type 3 can make four non-zero elements in $\bm P_{n}$ with type 2, but each element in $\bm P_{n-1}$ with type 1 cannot make any non-zero element in $\bm P_{n}$ with type 2.
		\item Type 3: There is only one non-zero element in $\bm P_n$. First, this non-zero item cannot be produced from elements in $\bm P_{n-1}$ with either type 1 or type 2. In addition, each element in $\bm P_{n-1}$ with type 3 can make only one {non-zero} element in $\bm P_{n}$ with type 3. 
	\end{enumerate}
	According to the decomposition in Theorem~\ref{thm:decomp2k+1}, we have 
	\begin{equation}\label{key}
		C_1 = B_1 = 4
	\end{equation}
	as the initial {values} for \eqref{eq:C_n} and \eqref{eq:B_n}. Thus, we have
	\begin{equation}\label{key}
		C_n = 4n^2 \quad \text{and} \quad B_n = 4n,
	\end{equation}
	which means that the number of non-zero elements in $\bm P_n$ is
	\begin{equation}\label{key}
		C_n + B_n + 1 = 4n^2 + 4n + 1 = (2n+1)^2.
	\end{equation}
	Thus, the theorem is {proved} by taking $n=k-1$ and $\bm I_{k-1}$ as the index set of non-zero elements in $\bm P_{k-1}$. 
\end{proof}

By representing the previous theorem in terms of convolution with multi-channels globally, we obtain the following theorem.
\begin{theorem}\label{thm:decomp_k_gloabl}
	Let $K \in \mathbb{R}^{1 \times M \times (2k+1)\times (2k+1)}$ and $d>k$.
	Then there is a series of kernels $S^n \in \mathbb{R}^{c_{n-1} \times c_n \times 3 \times 3}$ with {multi-channels} and $P \in \mathbb{R}^{(2k-1)^2 \times M \times 3 \times 3}$ such that
	\begin{equation}\label{key}
		K \ast X =  P \ast S^{k-1} \ast S^{k-2} \ast \cdots \ast S^1 \ast X, \quad \forall X \in \mathbb{R}^{d\times d},
	\end{equation}
	where $c_n = (2n+1)^2$ for $n=1:k-1$ and $\ast$ means {the} standard convolution with multi-channels and padding as defined in~\eqref{eq:defineConv}. 
\end{theorem}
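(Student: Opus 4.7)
The plan is to lift the single-channel, single-kernel decomposition of Lemma~\ref{thm:decom2} into a multi-channel format, interpreting the sum over the index set $\bm I_{k-1}$ as a single multi-channel convolution by packaging the compositions $S_{i_{k-1},j_{k-1}} \ast \cdots \ast S_{i_1,j_1}$ as a sequence of multi-channel convolutions whose channels index the admissible partial paths. I would first apply Lemma~\ref{thm:decom2} to each of the $M$ output channels of $K$ separately (viewed as single-channel kernels in $\mathbb{R}^{(2k+1)\times(2k+1)}$), obtaining, for every $m = 1:M$, a representation of $K_m \ast X$ as a sum over the same index set $\bm I_{k-1}$ of terms $P^m_{(i_1,j_1),\cdots,(i_{k-1},j_{k-1})} \ast S_{i_{k-1},j_{k-1}} \ast \cdots \ast S_{i_1,j_1} \ast X$. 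The crucial point is that the inner chain of $S$-convolutions does not depend on $m$, so these chains can be shared across all $M$ output channels.

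Next I would build the $S^n$ kernels by induction on $n$. For $n = 1:k-1$, I would use as the $c_n$ output channels of $S^n$ precisely the admissible partial paths $((i_1,j_1),\dots,(i_n,j_n))$ that correspond to non-zero $P_{(i_1,j_1),\cdots,(i_n,j_n)}$ in the recursive decomposition; by the counting argument in the proof of Lemma~\ref{thm:decom2}, there are exactly $(2n+1)^2$ such channels, matching the prescribed $c_n$. Each output channel $q$ at level $n$ is the image under $S_{i_n,j_n}$ of exactly one input channel $p$ at level $n-1$ (its parent partial path), so I would define $[S^n]_{p,q,s,t} = [S_{i_n,j_n}]_{s,t}$ with $(s,t)\in \{-1,0,1\}^2$ for that unique $(p,q)$ pair and set all other entries to zero. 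Then the definition~\eqref{eq:defineConv} of multi-channel convolution collapses to a single term, ensuring that the $q$-th channel of $S^n \ast \cdots \ast S^1 \ast X$ equals $S_{i_n,j_n} \ast \cdots \ast S_{i_1,j_1} \ast X$.

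Finally, I would assemble the top-level kernel $P \in \mathbb{R}^{(2k-1)^2 \times M \times 3 \times 3}$ by placing, at input channel $p$ (a full admissible path in $\bm I_{k-1}$) and output channel $m$, the $3 \times 3$ kernel $P^m_{(i_1,j_1),\cdots,(i_{k-1},j_{k-1})}$ obtained in the first step. Writing out $[P \ast S^{k-1} \ast \cdots \ast S^1 \ast X]_{m,\cdot,\cdot}$ via~\eqref{eq:defineConv} then produces the sum over $p \in \bm I_{k-1}$ of $P^m_p \ast (\text{corresponding chain of } S_{i_j,j_j})\ast X$, which by Lemma~\ref{thm:decom2} equals $K_m \ast X$, as required.

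The main obstacle is the combinatorial bookkeeping in the inductive construction of $S^n$: we must show that the non-zero partial-path structure identified in Lemma~\ref{thm:decom2} admits a consistent labelling of channels across levels so that the parent-to-child map (each level-$n$ channel has a unique level-$(n-1)$ parent via a unique $S_{i_n,j_n}$) is well-defined, and that the resulting sparse $S^n$ tensors indeed implement the recursive step of Theorem~\ref{thm:decomp2k+1} at the multi-channel level. Once this channel indexing is set up cleanly, the rest reduces to expanding the multi-channel convolution definition and matching terms with Lemma~\ref{thm:decom2}.
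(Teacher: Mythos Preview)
Your proposal is correct and follows essentially the same route as the paper: the paper also applies Lemma~\ref{thm:decom2} channel-by-channel, observes that the index sets $\bm I_n$ (and hence the $S$-chains) are independent of $K$, fixes bijections $\pi_n:\{1,\dots,(2n+1)^2\}\to\bm I_n$ together with the parent map $\tau_n:\bm I_n\to\bm I_{n-1}$ to label channels, defines $[S^n]_{p,q}=S_{i_n,j_n}$ exactly when $\pi_{n-1}(p)=\tau_n(\pi_n(q))$, and then assembles $P$ from the $P^m$ just as you describe. The only point worth tightening is that the ``admissible partial paths'' (your channel labels) should be taken as the universal index sets $\bm I_n$ from Lemma~\ref{thm:decom2}, which the paper stresses are independent of $K$; this is what guarantees the parent map is well-defined and that the same $S^n$ work simultaneously for all $M$ output channels.
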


\begin{proof}
	First, we follow the proof in Theorem~\ref{thm:decomp2k+1} and notice that the index set $\bm I_{n}$ 
	is independent from the kernel $K$ and has this important feature:
	\begin{equation}\label{key}
		\left((i_1,j_1), (i_2,j_2), \cdots, (i_n,j_n)\right) \in \bm I_n \Rightarrow \left((i_1,j_1), (i_2,j_2), \cdots, (i_m,j_m)\right)  \in \bm I_m,
	\end{equation}
	if $m \le n$. Thus, we can define the following operator $\tau_n: \bm I_n \mapsto \bm I_{n-1}$ as
	\begin{equation}\label{key}
		\tau_n\left(\left((i_1,j_1), (i_2,j_2), \cdots, (i_n,j_n)\right)\right) = \left((i_1,j_1), (i_2,j_2), \cdots, (i_{n-1},j_{n-1})\right). 
	\end{equation}
	Then, for each $\bm I_n$, we fix a bijection
	\begin{equation}\label{key}
		\pi_n : \{ 1, 2, \cdots, (2n+1)^n \} \mapsto \bm I_n
	\end{equation}
	to give a unique position for each element in $\bm I_n$. For example, alphabetical order can be used.
	Thus, we construct $S^n \in \mathbb{R}^{c_{\ell-1} \times c_n \times 3 \times 3}$ by taking
	\begin{equation}\label{key}
		\left[ S^{n} \right]_{p,q} = \begin{cases}
			S_{i_n,j_n}, &\text{if} ~\pi_n(q) = 	\left((i_1,j_1), \cdots, (i_n,j_n)\right)\text{ and } \pi_{n-1}(p) = \tau_n(\pi_n(q)), \\
			0, \quad &\text{others},
		\end{cases}
	\end{equation}
	for all $n=1:k-1$.
	Therefore, we can check that
	\begin{equation}\label{key}
		\begin{aligned}
			\left[S^n \ast S^{n-1} \ast \cdots S^1 \ast X\right]_q 
			&= \sum_{p=1}^{c_{n-1}}[S^n]_{p,q} \ast [S^{n-1} \ast \cdots S^1 \ast X]_p \\
			&= S_{i_n,j_n} \ast [S^{n-1} \ast \cdots S^1 \ast X]_{ \pi_{n-1}^{-1}\left( \tau_n(\pi_n(q))\right)} \\
			&= S_{i_n,j_n} \ast S_{i_{n-1},j_{n-1}} \ast [S^{n-2} \ast \cdots S^1 \ast X]_{ \pi_{n-2}^{-1}\left(\tau_{n-1}( \tau_n(\pi_n(q)))\right)} \\
			&= \cdots \\
			&=S_{i_n,j_n} \ast S_{i_{n-1},j_{n-1}} \ast  \cdots \ast S_{i_{1},j_{1}}  \ast X,
		\end{aligned}
	\end{equation}
	where 
	\begin{equation}
		\left(  (i_1,j_1), (i_2,j_2), \cdots, (i_n,j_n) \right) = \pi_n(q) \in \bm I_n
	\end{equation}
	for all $1\le q \le (2n+1)^2$. 
	According to Theorem~\ref{thm:decomp2k+1}, for each channel $[K]_m \in \mathbb{R}^{(2k+1) \times (2k+1)}$ in $K\in \mathbb{R}^{1 \times M \times (2k+1) \times (2k+1)}$, we have
	\begin{equation}\label{eq:K_m_P}
		[K]_m \ast X = \sum_{\left((i_1,j_1), \cdots, (i_{k-1},j_{k-1})\right) \in \bm I_{k-1} }  P^m_{(i_1,j_1),\cdots, (i_{k-1},j_{k-1})} \ast S_{i_{k-1},j_{k-1}} \ast \cdots \ast S_{i_1,j_1} \ast X.
	\end{equation}
	Finally, we finish the proof by constructing $P \in \mathbb{R}^{(2k-1)^2 \times M \times 3 \times 3}$ as
	\begin{equation}\label{key}
		[P]_{p,m} = P^m_{\pi^{-1}_{k-1}(p)},
	\end{equation}
	where $P^m_{\pi^{-1}_{k-1}(p)}$ is defined in \eqref{eq:K_m_P}. 
\end{proof}

\section{Universal approximation theorem for classic CNNs}\label{sec:approx_cnn}
In this section, we show the universal approximation theorem for classic CNNs
with 2D image inputs {and} standard multi-channel convolutions.

First, let us introduce CNN architecture with input data $x \in \mathbb{R}^{d\times d}$ and
ReLU~\cite{nair2010rectified} activation function($\sigma(t) = {\rm ReLU}(t):=\max\{0,t\}$ for any $t\in \mathbb{R}$):
\begin{equation}\label{eq:cnn}
	\begin{cases}
		f^{\ell}(x) &= \sigma (K^\ell \ast f^{\ell-1}(x) + b^\ell \bm 1) \quad \ell = 1:L, \\
		f(x) &= a \cdot {\mathcal V}\left(f^{L}(x)\right),\\
	\end{cases}
\end{equation}
where $f^0(x) = x \in \mathbb{R}^{d\times d}$, $K^\ell \in \mathbb{R}^{c_{\ell-1} \times c_{\ell}\times 3 \times 3}$, $b^\ell \in \mathbb{R}^{c_\ell}$, $f^\ell \in \mathbb{R}^{c_\ell \times d\times d }$, $a\in \mathbb{R}^{c_Ld^2}$, and ${\mathcal V}(f^L(x))$ denotes the vectorization of $f^L(x) \in \mathbb{R}^{c_L \times d\times d}$ by taking
\begin{equation}\label{key}
	\left[{\mathcal V}\left(f^L(x)\right)\right]_{(c-1)d^2 + (s-1)d + t} = \left[f^L(x)\right]_{c,s,t}
\end{equation} 
for all $s,t = 1:d$ and $c = 1:c_L$. 
For simplicity, we extend the definition of ${\mathcal V}(\cdot)$ for {the} general tensor in $\mathbb{R}^{d\times d}$, $\mathbb{R}^{c_\ell \times d \times d}$, etc.
Here, $K^\ell \ast f^{\ell-1}(x)$ follows the definition of convolution with multi-channel and constant or periodic padding as in \eqref{eq:defineConv}. In addition, {we consider} the special form of bias in CNNs, 
\begin{equation}\label{key}
	b^\ell \bm 1 := \left([b^\ell]_{1} \mathbf{I}, [b^\ell]_{2} \mathbf{I}, \cdots, [b^\ell]_{c_\ell} \mathbf{I} \right) \in \mathbb{R}^{c_\ell \times d\times d},
\end{equation}
where $\mathbf{I} \in \mathbb{R}^{d\times d}$ with $[\mathbf{I}]_{s,t} = 1$ for all $s,t = 1:d$. Moreover, we notice that there is no pooling, subsampling, or coarsening operator (layer) to apply in the above CNN architecture. Furthermore, to investigate the approximation properties of CNNs on $\mathbb{R}^{d\times d}$, we consider $\mathbb{R}^{d\times d}$ as a $d^2$-dimensional vector space with Frobenius norm. 

Before we prove the main approximation theorem, let us introduce the next two
lemmas, which reveal the connection between deep {ReLU} CNNs and one-hidden-layer ReLU NNs.
\begin{lemma}\label{lemm:linear_decomp}
	For any $W \in \mathbb{R}^{N \times d^2}$ and $\alpha, \beta\in \mathbb{R}^N$, there is a convolutional kernel $K \in \mathbb{R}^{1\times N\times (2\lfloor d/2\rfloor + 1)\times (2\lfloor d/2\rfloor + 1)}$, bias $b \in \mathbb{R}^N$, and weight $a\in \mathbb{R}^{Nd^2}$ such that
	\begin{equation}\label{key}
		\alpha \cdot \sigma\left( W {\mathcal V}(x) + \beta \right) = a \cdot {\mathcal V}\left(\sigma \left( K \ast x + b \bm 1 \right) \right)
	\end{equation}
	for any $x\in \mathbb{R}^{d\times d}$.
\end{lemma}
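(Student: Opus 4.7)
\textbf{Proof proposal for Lemma~\ref{lemm:linear_decomp}.} The plan is to exploit the fact that the kernel has spatial size $(2\lfloor d/2\rfloor+1)$, which is large enough that \emph{a single} spatial position of the convolution output sees all $d^2$ entries of the input. I would design $K$ so that at a chosen pivot position $(m^*, l^*)$ in the output, the value in channel $n$ equals $W_n\cdot\mathcal{V}(x)$, where $W_n$ is the $n$-th row of $W$; the bias $b$ will then supply $\beta$, ReLU will turn this into $\sigma(W_n\cdot\mathcal{V}(x)+\beta_n)$, and the readout vector $a$ will pick off these $N$ numbers with coefficients $\alpha_n$, ignoring the other $(d^2-1)N$ entries of the activation tensor.

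Concretely, I would take $(m^*, l^*)=(\lfloor d/2\rfloor+1,\lfloor d/2\rfloor+1)$. For $d$ odd, with $k=(d-1)/2$, the kernel-index set $\{-k,\dots,k\}^2$ maps $(s,t)\mapsto(m^*+s, l^*+t)$ bijectively onto $\{1,\dots,d\}^2$, so no padding is ever triggered at $(m^*, l^*)$. For $d$ even, $k=d/2$ and the kernel is one row and one column larger than the input; I would zero out the kernel entries whose shifted index falls outside $\{1,\dots,d\}^2$ (e.g., $s=-k$ or $t=-k$), so that any padding contribution --- constant with value $a$ or periodic wrap-around --- is annihilated, and the remaining $d\times d$ block again bijects with the $d^2$ input positions.

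Given this bijection $(s,t)\leftrightarrow (s',t'):=(m^*+s, l^*+t)$ between the effective kernel support and $\{1,\dots,d\}^2$, I would set
\begin{equation*}
[K]_{1,n,s,t} \;=\; W_{n,\,(s'-1)d+t'}\quad\text{on the effective support, and }0\text{ elsewhere,}
\end{equation*}
so that $[K\ast x]_{n,m^*,l^*} = W_n\cdot\mathcal{V}(x)$. Setting $[b]_n=\beta_n$ then gives $[K\ast x + b\bm 1]_{n,m^*,l^*} = W_n\cdot\mathcal{V}(x)+\beta_n$, and hence $[\sigma(K\ast x+b\bm 1)]_{n,m^*,l^*} = \sigma(W_n\cdot\mathcal{V}(x)+\beta_n)$. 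Finally I would define $a\in\mathbb{R}^{Nd^2}$ by $[a]_{(n-1)d^2+(m^*-1)d+l^*}=\alpha_n$ and every other entry $0$; the inner product collapses to $\sum_{n=1}^{N} \alpha_n\sigma(W_n\cdot\mathcal{V}(x)+\beta_n) = \alpha\cdot\sigma(W\mathcal{V}(x)+\beta)$.

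The conceptual content is minor --- essentially a bookkeeping argument showing that a single convolutional layer of full spatial size is at least as expressive as an arbitrary fully connected layer read off at one fixed output location. The only real obstacle is the even-$d$ case, where the kernel is strictly larger than the input, and one must check that zeroing the extra row and column of the kernel kills both kinds of padding (constant and periodic) so that the bijection above is genuinely in force. Once that is verified, the identification of $K$, $b$, $a$ above is immediate from the indexing conventions of $\mathcal{V}$ and the convolution formula~\eqref{eq:defineConv}.
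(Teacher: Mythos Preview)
Your argument is essentially the paper's own proof: pick the central pivot $(\lfloor d/2\rfloor+1,\lfloor d/2\rfloor+1)$, load each row of $W$ into the corresponding output channel of $K$, set $b=\beta$, and let $a$ be supported only at the $N$ pivot locations with weights $\alpha_n$. One small slip in your even-$d$ discussion: with the paper's convolution convention $[K\ast x]_{n,m^*,l^*}=\sum_{s,t}[K]_{n,s,t}[x]_{m^*+s,\,l^*+t}$ and pivot $m^*=l^*=k+1$, the shifted indices range over $\{1,\dots,d+1\}$, so the out-of-range contributions occur at $s=k$ or $t=k$ (not $s=-k$ or $t=-k$); your stated principle---zero the kernel wherever the shifted index leaves $\{1,\dots,d\}^2$---is correct, only the parenthetical example has the wrong sign.
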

\begin{proof}
	For simplicity, let us first assume that $d$ is odd. Then, we have $d = 2\lfloor d/2\rfloor + 1$ and
	\begin{equation}\label{key}
		\left[ K \ast x\right]_{n,\lceil d/2 \rceil, \lceil d/2 \rceil} = \sum_{s,t = -\lfloor d/2\rfloor}^{\lfloor d/2\rfloor} [K]_{n,s,t} [x]_{\lceil d/2 \rceil+s,\lceil d/2 \rceil+t} = {\mathcal V}([K]_n) \cdot {\mathcal V}(x).
	\end{equation}
	Thus, this proof is completed by taking $b = \beta$, 
	\begin{equation}\label{key}
		{\mathcal V}([K]_{n,:,:}) = [W]_{n,:}, \quad \text{and} \quad [a]_k = \begin{cases}
			[\alpha]_n, \quad &\text{if } k = (n-1)d^2 + \lceil d/2 \rceil^2,  \\
			0, \quad &\text{others},
		\end{cases}
	\end{equation}
	for all $n=1:N$. 
	If $d$ is even, we have $2\lfloor d/2\rfloor + 1 = d+1$. Thus, we can construct
	$a$ and $b$ as before and then take
	$ {\mathcal V}([K]_{n,- d/2:d/2-1,-d/2:d/2-1}) =  [W]_{n,:}$
	and $[K]_{n,d/2,-d/2:d/2} = [K]_{n,-d/2:d/2,d/2} = 0$. Thus, we have
	\begin{equation}\label{key}	
		\begin{aligned}
			\left[ K \ast x\right]_{n, d/2+1,  d/2 +1} 
			&= \sum_{s,t = - d/2}^{ d/2} [K]_{n,s,t} [x]_{ d/2+1 +s, d/2+1 +t} \\
			&= {\mathcal V}([K]_{n,- d/2:d/2-1,-d/2:d/2-1}) \cdot {\mathcal V}(x),
		\end{aligned}
	\end{equation}
	which finishes the proof.
\end{proof}
Basically, the above lemma shows that a {ReLU NN} function with one hidden layer can be represented by a one-layer CNN with a large kernel.
\begin{lemma}\label{lemm:nonl_big_K}
	For any bounded set $\Omega \subset \mathbb{R}^{d\times d}$, kernel $K \in \mathbb{R}^{1\times N\times (2\lfloor d/2\rfloor + 1)\times (2\lfloor d/2\rfloor + 1)}$, and bias vector $b \in \mathbb{R}^N$, there is a series of kernels $K^\ell \in \mathbb{R}^{c_{\ell-1} \times c_\ell \times 3 \times 3}$ and bias vectors $b^\ell \in \mathbb{R}^{c_\ell}$ such that
	\begin{equation}\label{eq:Kx+b_sigma}
		\left[K\ast x + b \bm1 \right]_{n,\lceil d/2 \rceil, \lceil d/2 \rceil} = \left[ K^{\lfloor d/2\rfloor} \ast f^{\lfloor d/2\rfloor-1}(x) + b^{\lfloor d/2\rfloor} \bm 1\right]_{n,\lceil d/2 \rceil, \lceil d/2 \rceil},~ \forall x \in \Omega,
	\end{equation}
	where $f^0(x) = x$, $c_\ell = (2\ell+1)^2$ for $\ell=1:{\lfloor d/2\rfloor}-1$, $c_{{\lfloor d/2\rfloor}}= N$, and
	\begin{equation}\label{key}
		f^{\ell}(x) = \sigma \left( K^\ell \ast f^{\ell-1}(x) + b^\ell \bm 1  \right).
	\end{equation}
\end{lemma}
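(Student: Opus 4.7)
Set $k = \lfloor d/2\rfloor$ and $m_0 = n_0 = \lceil d/2\rceil$. The strategy is to first decompose $K$ algebraically into a chain of $3\times 3$ convolutions using Theorem~\ref{thm:decomp_k_gloabl}, then to insert the ReLU nonlinearities into this chain by means of a bias trick that forces each $\sigma$ to act as the identity on $\Omega$. Concretely, applying Theorem~\ref{thm:decomp_k_gloabl} to $K\in\mathbb{R}^{1\times N\times(2k+1)\times(2k+1)}$ produces $3\times 3$ kernels $S^1,\dots,S^{k-1}$ with channel progression $c_\ell = (2\ell+1)^2$ and $P\in\mathbb{R}^{(2k-1)^2\times N\times 3\times 3}$ such that $K\ast x = P\ast S^{k-1}\ast\cdots\ast S^1\ast x$. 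I set $K^\ell := S^\ell$ for $\ell=1,\dots,k-1$ and $K^k := P$, and introduce a bias-free reference chain $\tilde f^0(x) = x$, $\tilde f^\ell(x) = K^\ell \ast \tilde f^{\ell-1}(x)$, so that $\tilde f^k(x) = K\ast x$ by the decomposition.

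The bias trick rests on the following affine identity, valid for both padding types: for any tensors $Y,Z$ of matching shape,
\begin{equation*}
K^\ell \ast (Y+Z) \;=\; K^\ell \ast Y \;+\; K^\ell \ast_0 Z,
\end{equation*}
where $\ast_0$ denotes convolution with zero padding (for periodic padding, $\ast_0 = \ast$). This follows by splitting the defining sum \eqref{eq:defineConv} into interior and padded positions: constant-$a$ padding contributes $a$ once to the sum $Y+Z$ but would contribute $a+a$ if the convolution acted linearly on both arguments, and the asymmetric use of $\ast_0$ accounts for exactly this mismatch. Using this identity I claim inductively that $b^1,\dots,b^{k-1}$ can be chosen, each component large enough, so that for every $x\in\Omega$ the argument of each intermediate ReLU is pointwise nonnegative on the whole $d\times d$ grid (hence $\sigma$ acts as the identity) and
\begin{equation*}
f^\ell(x) \;=\; \tilde f^\ell(x) + \gamma^\ell, \qquad \gamma^0 = 0, \quad \gamma^\ell = K^\ell \ast_0 \gamma^{\ell-1} + b^\ell \bm 1,
\end{equation*}
with $\gamma^\ell$ independent of $x$. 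The feasibility of these bias choices is guaranteed by the boundedness of $\Omega$, which yields a uniform pointwise bound on $\tilde f^\ell(x)$ at each fixed $\ell$.

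The last layer carries no ReLU, so combining the decomposition, the inductive claim, and the affine identity gives
\begin{equation*}
K^k\ast f^{k-1}(x) + b^k\bm 1 \;=\; K\ast x \;+\; K^k\ast_0 \gamma^{k-1} \;+\; b^k\bm 1.
\end{equation*}
Evaluating at the center index $(n,m_0,n_0)$ and setting
\begin{equation*}
[b^k]_n \;:=\; [b]_n - \bigl[K^k \ast_0 \gamma^{k-1}\bigr]_{n,m_0,n_0}
\end{equation*}
cancels the $x$-independent offset and delivers \eqref{eq:Kx+b_sigma}. The main obstacle is the bookkeeping around constant padding with $a\neq 0$: ordinary linearity of convolution fails, and one must use the correct affine identity (with its asymmetric zero-padded summand) consistently throughout the induction in order to be sure that $\gamma^\ell$ really depends only on the kernels and biases and never on $x$. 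Once this subtlety is settled, the remainder of the argument is essentially routine verification.
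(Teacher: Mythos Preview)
Your proposal is correct and follows essentially the same route as the paper: apply Theorem~\ref{thm:decomp_k_gloabl} to set $K^\ell=S^\ell$ and $K^{\lfloor d/2\rfloor}=P$, choose the intermediate biases large enough (using boundedness of $\Omega$) so that each ReLU acts as the identity, and then absorb the accumulated $x$-independent offset into $b^{\lfloor d/2\rfloor}$ at the center index. Your explicit affine identity $K^\ell\ast(Y+Z)=K^\ell\ast Y+K^\ell\ast_0 Z$ is in fact a more careful accounting of the constant-padding case than the paper's own proof, which writes the offset $B$ using ordinary $\ast$ and tacitly treats the convolution as linear.
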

\begin{proof}According to Theorem~\ref{thm:decomp_k_gloabl}, we know there is $P\in \mathbb{R}^{(2\lfloor d/2\rfloor - 1)^2\times N \times 3 \times 3}$
	and $S^\ell \in  \mathbb{R}^{c_{\ell-1} \times c_\ell \times 3 \times 3}$ with $c_\ell = (2\ell+1)^2$ for $\ell=1:{\lfloor d/2\rfloor}-1$ such that
	\begin{equation}\label{key}
		K\ast x = P\ast S^{\lfloor d/2\rfloor - 1} \ast \cdots \ast S^{1} \ast x
	\end{equation}
	for any $x \in \Omega$. Thus, we can take
	\begin{equation}\label{key}
		K^\ell = S^\ell, \quad \ell=1:{\lfloor d/2\rfloor}-1 \quad \text{and} \quad K^{\lfloor d/2\rfloor} = P.
	\end{equation}
	Then, we can prove \eqref{eq:Kx+b_sigma} by taking $b^\ell$ in an appropriate way. From $\ell=1$ to ${\lfloor d/2\rfloor}-1$, we define $b^\ell$ consecutively as
	\begin{equation}\label{key}
		[b^\ell]_q =  \max_{1 \le s, t \le d}\sup_{x\in \Omega}\left| \left[K^\ell \ast f^{\ell-1}(x)\right]_{q,s,t} \right|, \quad q = 1:c_\ell.
	\end{equation}
	As $\Omega \subset \mathbb{R}^{d\times d}$ is bounded and $\sigma = {\rm ReLU}$ is continuous, we know that
	\begin{equation}\label{key}
		[b^\ell]_q < \infty, \quad \forall \ell = 1:{\lfloor d/2\rfloor}-1, \quad q = 1:c_\ell.
	\end{equation}
	Therefore, we have
	\begin{equation}\label{key}
		f^{\ell}(x) = \sigma \left( K^\ell \ast f^{\ell-1}(x) + b^\ell \bm 1  \right) =  K^\ell \ast f^{\ell-1}(x) + b^\ell \bm 1
	\end{equation}
	because of the definition of $\sigma(x)$ and $b^\ell$. Thus, we have
	\begin{equation}\label{key}
		K^{\lfloor d/2\rfloor} \ast f^{\lfloor d/2\rfloor-1}(x) = P\ast S^{\lfloor d/2\rfloor - 1} \ast \cdots \ast S^{1} \ast x + B = K\ast x + B,
	\end{equation}
	where
	\begin{equation}
		B = \sum_{\ell=2}^{\lfloor d/2\rfloor-1} P \ast S^{\lfloor d/2\rfloor - 1} \ast \cdots \ast S^{\ell} \ast (b^{\ell-1}\bm 1)  + P \ast (b^{\lfloor d/2\rfloor - 1}\bm 1) \in \mathbb{R}^{N \times d\times d},
	\end{equation}
	which is constant in respect to $x$. 
	Finally, we take 
	\begin{equation}\label{key}
		[b^{\lfloor d/2\rfloor}]_n = [b]_n - [B]_{n,\lceil d/2 \rceil, \lceil d/2 \rceil},
	\end{equation}
	which finishes the proof. 
\end{proof}
Lemma~\ref{lemm:nonl_big_K} shows that a one-layer ReLU CNN with a large kernel
{can} be represented by a deep ReLU CNN with multi-channel $3\times 3$ kernels. 

To obtain our final theorem (Theorem~\ref{thm:approx_CNN}), let us first recall the following approximation result for one-hidden-layer ReLU NNs.
\begin{theorem}[\cite{bach2017breaking,siegel2021improved}]\label{thm:approx_fN}
	Assume {that} $f: \Omega \subset \mathbb{R}^D \mapsto \mathbb{R}$ and {that} $\Omega$ is bounded. Then there is a {ReLU NN} with one hidden layer $f_N(x) = \alpha \cdot \sigma\left( W x + \beta \right)$ where
	$W \in \mathbb{R}^{N \times D}$ and $\alpha, \beta\in \mathbb{R}^N$, such that
	\begin{equation}\label{eq:approx_DNN}
		\| f - f_N \|_{L^2(\Omega)} \lesssim  N^{-\frac{1}{2}-\frac{3}{2D}} \|f\|_{ \mathscr{K}_1(\mathbb D)}.
	\end{equation}
\end{theorem}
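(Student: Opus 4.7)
This is a cited approximation result (Bach, sharpened by Siegel--Xu), so below is how I would reconstruct the proof rather than a new argument. The strategy has two pieces: an integral/convex-hull representation of $f$ in a ReLU ridge-function dictionary, followed by a sampling argument whose rate is sharpened via a metric-entropy estimate for that dictionary.

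\textbf{Step 1 (integral representation).} First I would use the fact that $f\in\mathscr{K}_1(\mathbb{D})$ has a spectral Barron representation $f(x)=\int e^{i\omega\cdot x}\hat f(\omega)\,d\omega$, combined with the pointwise identity expressing $\cos(\omega\cdot x)-1$ (and similarly $\sin(\omega\cdot x)$) as a signed integral of ReLU ridge functions with bounded parameters. Plugging this into the spectral representation of $f$ and applying Fubini produces a signed measure $\mu$ supported on the ReLU dictionary
\[
\mathbb{P}=\{\,\pm\sigma(w\cdot x+b)\,:\,\|w\|_1+|b|\le 1\,\}\cup\{\pm 1\},
\]
with total variation $\|\mu\|_{TV}\lesssim\|f\|_{\mathscr{K}_1(\mathbb{D})}$ and with $f=\int g\,d\mu(g)$; equivalently, $f/\|f\|_{\mathscr{K}_1(\mathbb{D})}$ lies in the closed convex hull of $\mathbb{P}$ in $L^2(\Omega)$.

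\textbf{Step 2 (Maurey plus Siegel's refinement).} Normalizing $|\mu|/\|\mu\|_{TV}$ to a probability measure and sampling $g_1,\dots,g_N$ i.i.d., a standard second-moment computation (Maurey's empirical method) yields $\mathbb{E}\bigl\|f-\tfrac{1}{N}\sum_i g_i\bigr\|_{L^2(\Omega)}^2\lesssim N^{-1}\|f\|_{\mathscr{K}_1(\mathbb{D})}^2$, which is the classical Barron/Bach rate $N^{-1/2}$. To harvest the extra exponent $3/(2D)$ I would replace this crude bound by the entropy-based refinement of Siegel: using a sharp $L^2(\Omega)$-entropy estimate of order $\varepsilon_n(\mathbb{P})\lesssim n^{-1/2-3/(2D)}$ for the parametric dictionary (which in turn follows from the Lipschitz regularity of $(w,b)\mapsto\sigma(w\cdot x+b)$ in $L^2(\Omega)$ combined with Carl's inequality linking entropy numbers to Kolmogorov $n$-widths), and feeding this into a stratified/greedy sampling construction in place of plain i.i.d.\ sampling, the rate improves from $N^{-1/2}$ to $N^{-1/2-3/(2D)}$.

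\textbf{Main obstacle.} The Fourier identity in Step 1 is a routine Barron-style computation and the plain Maurey bound is standard; the genuine difficulty is producing the sharp $L^2$-entropy estimate for $\mathbb{P}$ and then cleanly threading it through a probabilistic or greedy construction so that the resulting combination is still an $N$-term ReLU network realizing the improved exponent. That technical package is precisely the contribution of Siegel and coauthors, and in a fresh write-up I would expect it to consume most of the work; Steps 1 and 2's first half are essentially bookkeeping.
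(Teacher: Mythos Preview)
The paper does not prove this theorem: it is quoted as a known result from \cite{bach2017breaking,siegel2021improved} and used as a black box in the subsequent CNN approximation theorems. So there is no ``paper's own proof'' to compare your proposal against; your instinct to treat it as a cited result and merely sketch a reconstruction is exactly right.

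On the substance of your sketch: the outline in Step~2 (Maurey sampling plus an entropy-based sharpening to extract the extra $3/(2D)$) is indeed the mechanism behind the Siegel--Xu rate. One small inaccuracy in Step~1: you do not need, and in general cannot assume, a Fourier/spectral Barron representation for $f$. The norm $\|f\|_{\mathscr{K}_1(\mathbb D)}$ is defined directly as the gauge of the closed convex hull of the ReLU dictionary, so by definition $f/\|f\|_{\mathscr{K}_1(\mathbb D)}$ already lies in that closed convex hull and hence admits an integral representation over $\mathbb D$ (or is an $L^2$-limit of finite convex combinations) without any detour through $\hat f$. The Fourier route is how one shows that the classical Barron space embeds into $\mathscr{K}_1(\mathbb D)$, but it is not part of the approximation argument itself. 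With that correction your reconstruction is consistent with the cited literature.
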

Here, $a\lesssim b$ means $a \le Cb$ where $C$ depends only on dimension $D$ and domain $\Omega$.
{In addition}, $\|f\|_{\mathscr{K}_1(\mathbb D)}$ is the norm defined by the {gauge} of $B_1(\mathbb D)$,
\begin{equation}\label{key}
	\|f\|_{\mathscr{K}_1(\mathbb D)} = \inf \{ c > 0 ~:~ f \in c B_1(\mathbb D)\},
\end{equation} 
where $B_1(\mathbb D)$ is given by
\begin{equation}\label{key}
	B_1 (\mathbb D) = \overline{\left\{ \sum_{j=1}^n a_j h_j~:~ n\in \mathbb N, h_j \in \mathbb{D}, \sum_{j=1}^n |a_j| \le 1\right\}},
\end{equation}
and 
\begin{equation}\label{key}
	\mathbb D = \{ \sigma(\omega \cdot x + b) ~:~ \omega \in \mathbb{R}^D, b \in \mathbb{R} \}
\end{equation}
is the dictionary generated by the activation function $\sigma (t) = {\rm ReLU}(t)$.
More details about the $\|f\|_{\mathscr{K}_1(\mathbb D)}$ norm, its approximation properties, and its connections with what is known as Barron norm can be found in~\cite{siegel2021characterization,siegel2021improved,e2021barron}. 
Generally, the underlying model for image classification is a piecewise constant function for which it is impossible to have a finite $\mathscr{K}_1(\mathbb D)$ norm. However, the ReLU CNN functions that we discuss in this paper are the feature extraction parts of ReLU CNN models for image classification. Thus, $f(x)$ may have a  finite $\mathscr{K}_1(\mathbb D)$ norm as a feature extraction function not a classification model.

By combining Lemma~\ref{lemm:linear_decomp}, Lemma~\ref{lemm:nonl_big_K}, and Theorem~\ref{thm:approx_fN}, we have the following approximation theorem of deep CNNs with multi-channel $3\times3$ kernels.
\begin{theorem}\label{thm:approx_CNN}
	Let $\Omega \subset \mathbb{R}^{d\times d}$ be bounded. Assume that $f: \Omega \mapsto \mathbb{R}$ and $\|f\|_{\mathscr{K}_1(\mathbb D)} < \infty$.
	Then there is a CNN function $\widetilde f: {R}^{d\times d}  \mapsto \mathbb{R}$ as defined in \eqref{eq:cnn}
	with multi-channel $3\times 3$ kernels, where
	\begin{equation}
		\begin{aligned}
			&\text{depth (number of convolutional layers):} \quad &L& = \lfloor d/2\rfloor, \\
			&\text{width (number of channels at each layer):} \quad &c_\ell& = (2\ell + 1 )^2,
		\end{aligned}
	\end{equation}
	for $\ell = 1:L-1$ and $c_L = N$, such that
	\begin{equation}\label{key}
		\|f - \widetilde f\|_{L^2(\Omega)}   \lesssim  N^{-\frac{1}{2}-\frac{3}{2d^2}}\|f\|_{\mathscr{K}_1(\mathbb D)}.
	\end{equation}
\end{theorem}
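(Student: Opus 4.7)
The plan is to compose the three ingredients already built up in the excerpt: Theorem 3.3 (one-hidden-layer ReLU NN approximation), Lemma 3.1 (realizing a one-hidden-layer NN output as a one-layer CNN with a large kernel evaluated at the center pixel), and Lemma 3.2 (replacing that large kernel by a depth-$\lfloor d/2 \rfloor$ stack of $3\times 3$ multi-channel convolutions). Since $\Omega \subset \mathbb{R}^{d\times d}$ is viewed as a $d^2$-dimensional space under the vectorization ${\mathcal V}$, I would first apply Theorem 3.3 with $D = d^2$ to obtain a shallow ReLU NN $f_N({\mathcal V}(x)) = \alpha \cdot \sigma(W{\mathcal V}(x) + \beta)$ with $W \in \mathbb{R}^{N\times d^2}$, $\alpha,\beta \in \mathbb{R}^N$, satisfying
\begin{equation}
\|f - f_N\|_{L^2(\Omega)} \lesssim N^{-\frac{1}{2}-\frac{3}{2d^2}} \|f\|_{\mathscr{K}_1(\mathbb D)}.
\end{equation}

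Next, I would invoke Lemma 3.1 to produce a convolutional kernel $K \in \mathbb{R}^{1 \times N \times (2\lfloor d/2\rfloor + 1)\times (2\lfloor d/2\rfloor + 1)}$, a bias $b \in \mathbb{R}^N$, and a vector $a \in \mathbb{R}^{N d^2}$ for which $f_N({\mathcal V}(x)) = a \cdot {\mathcal V}(\sigma(K\ast x + b\bm 1))$ pointwise on $\Omega$. By construction $a$ is supported only on the central-pixel indices, so the scalar output depends on $\sigma(K\ast x + b\bm 1)$ only at the central spatial location.

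I would then apply Lemma 3.2 to rewrite the single large-kernel convolution as a deep CNN with $3\times 3$ kernels: there exist $K^\ell \in \mathbb{R}^{c_{\ell-1}\times c_\ell \times 3 \times 3}$ and $b^\ell \in \mathbb{R}^{c_\ell}$ with $c_\ell = (2\ell+1)^2$ for $\ell = 1:\lfloor d/2\rfloor -1$ and $c_{\lfloor d/2\rfloor} = N$, producing intermediate features $f^\ell(x) = \sigma(K^\ell \ast f^{\ell-1}(x) + b^\ell \bm 1)$, such that the central-pixel values of $K\ast x + b\bm 1$ and of $K^{\lfloor d/2\rfloor}\ast f^{\lfloor d/2\rfloor-1}(x) + b^{\lfloor d/2\rfloor}\bm 1$ agree for every $x\in\Omega$. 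Since $\sigma$ is applied componentwise and $a$ reads out only those central entries, defining $\widetilde f(x) = a\cdot {\mathcal V}(f^{\lfloor d/2\rfloor}(x))$ yields $\widetilde f(x) = f_N({\mathcal V}(x))$ on $\Omega$, and $\widetilde f$ has exactly the depth and widths asserted in the theorem. The claimed error bound follows immediately from the bound on $\|f - f_N\|_{L^2(\Omega)}$.

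The step requiring the most care is not an analytic estimate but a bookkeeping check: one must verify that Lemma 3.2's guarantee, which is only about the central pixel of the pre-activation at layer $\lfloor d/2\rfloor$, is enough to reconstruct the scalar output $f_N$. This works because Lemma 3.1 already arranged $a$ to be supported on the channel-wise center indices $(c-1)d^2 + \lceil d/2\rceil d + \lceil d/2\rceil$ (modulo parity adjustment when $d$ is even), so only the central pre-activations — not the full spatial feature map — matter. Everywhere else, the intermediate features $f^\ell$ may deviate from what a naive identification would suggest, but those deviations are invisible to the final linear readout, and no approximation error is incurred beyond that already accounted for by Theorem 3.3.
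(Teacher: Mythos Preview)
Your proposal is correct and follows essentially the same route as the paper: apply Theorem~3.3 to obtain the shallow ReLU NN $f_N$, then combine Lemma~3.1 and Lemma~3.2 to realize $f_N({\mathcal V}(x))$ exactly as a CNN $\widetilde f$ of the stated depth and widths. Your explicit bookkeeping remark---that Lemma~3.2 matches only the central-pixel pre-activations but this suffices because the readout vector $a$ from Lemma~3.1 is supported precisely on those central indices---is a helpful clarification that the paper's proof leaves implicit.
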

\begin{proof}
	First, we assume that $f_N({\mathcal V}(x))$ is the approximation of $f(x)$ using a one-hidden-layer ReLU NN, as shown in Theorem~\ref{thm:approx_fN}. According to Lemma~\ref{lemm:linear_decomp} and Lemma~\ref{lemm:nonl_big_K}, there is a CNN $\widetilde f(x)$ as defined in~\eqref{eq:cnn} with the hyperparameters listed above such that $\widetilde f(x) = f_N({\mathcal V}(x))$ for any $x\in \Omega$. This completes the proof. 
\end{proof}

Here, we notice that the total number of free parameters in $f_N({\mathcal V}(x))$, as shown in Theorem~\ref{thm:approx_fN}, is $\mathcal N_F = N(d^2 + 2)$ if $x\in \mathbb{R}^{d\times d}$. We also note that the total number of free parameters of the CNN function $\widetilde f(x)$ as in~\eqref{eq:cnn} with {hyperparameters} in Theorem~\ref{thm:approx_CNN} is 
\begin{equation}\label{key}
	\begin{aligned}
		\mathcal N_C = &\sum_{\ell=1}^{\lfloor d/2\rfloor-1}	\left( \underbrace{9(2\ell+1)^2(2\ell-1)^2}_{K^\ell}+ \underbrace{(2\ell+1)^2}_{b^\ell} \right) \\
		&+ \underbrace{N\left( (2\lfloor d/2\rfloor-1)^2 +1\right)}_{K^{\lfloor d/2\rfloor}~\&~b^{\lfloor d/2\rfloor}} + \underbrace{Nd^2}_a \\
		\le &~2(d^5 + Nd^2).
	\end{aligned}
\end{equation}
A Comparison of $\mathcal N_F$ and $\mathcal N_C$ shows that the 
convolutional neural networks with hyperparameters (depth $L$ and width $c_\ell$) in Theorem~\ref{thm:approx_CNN} can achieve the same asymptotic approximation order of one-hidden-layer ReLU NNs as $\mathcal O\left(N^{-\frac{1}{2}-\frac{3}{2d^2}}\right)$.  That is, there is an upper bound for the approximation error as $CN^{-\frac{1}{2}-\frac{3}{2d^2}}$ where $C$ depends only on dimension $d$ and domain $\Omega$.

In addition, in Theorem~\ref{thm:approx_CNN}, to achieve the approximation power for ReLU CNNs, it is 
necessary for the depth to exceed $d/2$ and for the number of layers in $\ell$-th layer to be at least $(2\ell+1)^2$. However, it is by no means common for practical CNN models to meet both of these conditions.
Here, we may interpret these conditions from other perspectives. For the depth (number of layers) of CNN models for image classification, we usually apply ResNet~\cite{he2016deep} CNNs with 18 or 34 layers for CIFAR-10 and CIFAR-100 with input images in $\mathbb{R}^{32\times32}$. Furthermore, we prefer deeper ResNet CNNs, for example, ResNet~\cite{he2016identity} with 50, 101, or more than 1,000 layers for ImageNet, which has input images in $\mathbb{R}^{224\times224}$. In all these examples, depth $L$ is greater than $d/2$ for input images in $\mathbb{R}^{d\times d}$. For the width (number of channels) of CNN models for image classification, we notice that every practical CNN model increases the input channel to a relatively large number and retains this width for several layers. These two observations indicate that it is necessary to include more layers and channels in practical CNN models.

Moreover, Theorem~\ref{thm:approx_CNN} requires a huge number of channels in the last layer to achieve a small approximate error. However, the number of channels in the last layer in practical CNNs is not very large in general. For example, one may take 512, 1,024, or 2,048 channels in the output layer for CIFAR and ImageNet classification problems. 
To understand this, we recall that the target function in this approximation result is the feature extraction function not the piecewise constant classification function. Thus, the $\|f\|_{\mathscr{K}_1(\mathbb D)}$ norm may be very small such that a relatively small $N$ may be enough to achieve sufficient approximation power. This implies that the feature extraction functions in image classification may lie in a special function class which is much smaller than $\mathscr{K}_1(\mathbb D)$.

Furthermore, Theorem~\ref{thm:approx_CNN} does not tell us why CNNs are much better than DNNs for image classification in terms of approximation power. However, Theorem~\ref{thm:approx_CNN} does indicate that CNNs with certain structures are no worse than DNNs in terms of approximation. This is important when CNNs are applied in fields in which approximation accuracy is a critical metric, such as numerical solutions of PDEs~\cite{karniadakis2021physics}. On the other hand, Theorem~\ref{thm:approx_CNN} shows that deep ReLU CNNs with certain structures can represent ReLU NNs with one hidden layer. This implies that the function class of ReLU CNNs is much larger than the function class of one-hidden-layer ReLU NNs. More precisely, the function class of ReLU CNNs may include or can efficiently approximate some very special functions that cannot be approximated directly using ReLU DNNs.

\section{Approximation properties of ResNet and MgNet}\label{sec:approx_mgnet}
In this section, we show the approximation properties for one version of ResNet~\cite{he2016deep}, pre-act ResNet~\cite{he2016identity}, and MgNet~\cite{he2019mgnet}.

\paragraph{Approximation properties of ResNet and pre-act ResNet. } First, let us introduce some mathematical formulas to define these two network functions:
\begin{description}
	\item[ResNet:] 
	\begin{equation}\label{eq:resnet}
		\begin{cases}
			f^{\ell}(x) &=\sigma\left( R^\ell \ast f^{\ell-1}(x) +  B^\ell \ast \sigma \left(A^\ell\ast f^{\ell-1}(x) + a^\ell \bm 1\right)+ b^\ell \bm 1\right),\\
			f(x) &= a \cdot {\mathcal V}\left(f^{L}(x)\right),
		\end{cases}
	\end{equation}
	for $\ell = 1:L$, where $f^0(x) = x \in \mathbb{R}^{d\times d}$, $A^\ell \in \mathbb{R}^{c_{\ell-1}\times C_{\ell} \times 3 \times 3}$, $B^\ell \in \mathbb{R}^{C_{\ell}\times c_{\ell} \times 3 \times 3}$, $R^\ell \in \mathbb{R}^{c_{\ell-1} \times c_{\ell} \times 1 \times 1}$, $a^\ell \in \mathbb{R}^{C_\ell}$, $b^{c_\ell}$, and $a\in \mathbb{R}^{c_Ld^2}$.
	\item[Pre-act ResNet:] 
	\begin{equation}\label{eq:presnet}
		\begin{cases}
			f^{\ell}(x) &= R^\ell \ast f^{\ell-1}(x) +  \sigma \left( B^\ell \ast \sigma \left( A^\ell\ast f^{\ell-1}(x) + a^\ell \bm 1\right) + b^\ell \bm 1\right), \\
			f(x) &= a \cdot {\mathcal V}\left(f^{L}(x)\right),
		\end{cases}
	\end{equation}
	for $\ell = 1:L$, where $f^0(x) = x \in \mathbb{R}^{d\times d}$, $A^\ell \in \mathbb{R}^{c_{\ell-1}\times C_{\ell} \times 3 \times 3}$, $B^\ell \in \mathbb{R}^{C_{\ell}\times c_{\ell} \times 3 \times 3}$, $R^\ell \in \mathbb{R}^{c_{\ell-1} \times c_{\ell} \times 1 \times 1}$, $a^\ell \in \mathbb{R}^{C_\ell}$, $b^{c_\ell}$, and $a\in \mathbb{R}^{c_Ld^2}$.
\end{description}
Each iteration from $f^{\ell-1}(x)$ to $f^\ell(x)$ is called a basic block of ResNet or pre-act ResNet. Note here that we {add} an extra $1\times 1$ kernel in front of $f^{\ell-1}(x)$ in each block, introduced in~\cite{he2016deep} {originally}, to adjust {to} the change of {channels} from $f^{\ell-1}(x)$ to $f^{\ell}(x)$. 
In addition, we notice that ResNet and pre-act ResNet can degenerate {into a} classical CNN as in~\eqref{eq:cnn}, if
we take $R^\ell = 0$. In this sense, ResNet and pre-act ResNet have approximation properties if we assume that $R^{\ell}$ is not given a priori but set as a trainable parameter. 
However, a key reason for the success of ResNet is that we set $R^\ell$ a priori, especially when taking $R^\ell$ as {the} identity operator when $f^{\ell-1}(x)$ and $f^{\ell}(x)$ have the same number of channels~\cite{he2016deep,he2016identity}. Thus, we consider ResNet and pre-act ResNet networks with an arbitrarily given value for $R^\ell$.

\begin{theorem}\label{thm:approx_ResNet}
	Let $\Omega \subset \mathbb{R}^{d\times d}$ be bounded, and assume that $f: \Omega \mapsto \mathbb{R}$ with  $\|f\|_{ \mathscr{K}_1(\mathbb D)}< \infty$. 
	For any given $R^\ell \in\mathbb{R}^{c_{\ell-1}\times c_{\ell} \times1\times1}$, there is a ResNet network
	$\widetilde f(x)$ as in~\eqref{eq:resnet}, where the hyperparameters satisfy
	\begin{equation}
		\begin{aligned}
			\text{depth (number of  blocks):} \quad &L = \lfloor \lfloor d/2\rfloor/2\rfloor, \\
			\text{width (number of channels):} \quad &c_\ell = (4\ell + 1 )^2,~C_\ell =2(4\ell-1)^2 
		\end{aligned}
	\end{equation}
	for $\ell=1:L-1$, $C_L =2(4L-1)^2$, and $c_L = N$, such that
	\begin{equation}\label{key}
		\|f - \widetilde f\|_{L^2(\Omega)}   \lesssim  N^{-\frac{1}{2}-\frac{3}{2d^2}} \|f\|_{ \mathscr{K}_1(\mathbb D)}.
	\end{equation}
\end{theorem}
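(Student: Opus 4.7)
The plan is to reduce this theorem to Theorem~\ref{thm:approx_CNN} by showing that one ResNet basic block of the form~\eqref{eq:resnet} can simulate two consecutive classic-CNN layers, \emph{uniformly} in the prescribed skip kernel $R^\ell$. Starting from the classic CNN approximant $\widetilde f^{\text{CNN}}$ provided by Theorem~\ref{thm:approx_CNN}, with $\lfloor d/2\rfloor$ layers of $3\times 3$ convolutions and channel widths $c^{\text{CNN}}_\ell=(2\ell+1)^2$, I would pair classic layers two at a time so that the $\ell$-th ResNet block takes $f^{\ell-1}_{\text{Res}}=f^{2\ell-2}_{\text{CNN}}$ to $f^{\ell}_{\text{Res}}=f^{2\ell}_{\text{CNN}}$. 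Two classic layers per ResNet block give exactly $L=\lfloor \lfloor d/2\rfloor/2\rfloor$ blocks, and the output widths $(4\ell+1)^2=c^{\text{CNN}}_{2\ell}$ reproduce the values stated in the theorem.

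To construct each block I would split the $C_\ell=2(4\ell-1)^2$ intermediate channels into two halves of size $(4\ell-1)^2$ each. On the first half I set $A^\ell_1=K^{2\ell-1}$, $a^\ell_1=b^{2\ell-1}$, and $B^\ell_1=K^{2\ell}$, so that the first half contributes exactly $K^{2\ell}\ast\sigma(K^{2\ell-1}\ast f^{\ell-1}_{\text{Res}}+b^{2\ell-1}\bm 1)$, which is the pre-activation of the two paired classic layers. On the second half I would invoke a standard \emph{ReLU-as-identity} trick: because $\Omega$ is bounded and every previous block is a Lipschitz composition of ReLU and convolution, $\|f^{\ell-1}_{\text{Res}}\|_\infty$ is uniformly bounded on $\Omega$, so I can choose a constant $c_\ell>0$ large enough that $\sigma(A^\ell_2\ast f^{\ell-1}_{\text{Res}}+c_\ell\bm 1)=A^\ell_2\ast f^{\ell-1}_{\text{Res}}+c_\ell\bm 1$. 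Taking $A^\ell_2$ as the centered identity $3\times 3$ kernel that embeds the $c_{\ell-1}$ channels of $f^{\ell-1}_{\text{Res}}$ into the $(4\ell-1)^2$ available channels (extras zero-padded), and $B^\ell_2$ as the $1\times 1$ kernel $-R^\ell$ (also zero-padded in the extra input channels), a short calculation gives $B^\ell_2\ast g^\ell_2=-R^\ell\ast f^{\ell-1}_{\text{Res}}+D_\ell$ for an explicit channel-wise constant $D_\ell$. Setting $b^\ell=b^{2\ell}-D_\ell$ then cancels the skip contribution and the outer $\sigma$ reproduces $f^{2\ell}_{\text{CNN}}$ pointwise on $\Omega$.

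A routine induction on $\ell$ now yields $f^\ell_{\text{Res}}(x)=f^{2\ell}_{\text{CNN}}(x)$ for all $x\in\Omega$ and $\ell=1:L$. Identifying the final linear readout $a$ of the ResNet with that of the classic CNN gives $\widetilde f=\widetilde f^{\text{CNN}}$ on $\Omega$, and the bound in Theorem~\ref{thm:approx_CNN} transfers verbatim. When $\lfloor d/2\rfloor$ is odd there is one leftover classic-CNN layer; I would handle this by redoing Theorem~\ref{thm:approx_CNN} with the slightly smaller large kernel of size $(4L+1)\times(4L+1)\ge(2\lfloor d/2\rfloor-1)\times(2\lfloor d/2\rfloor-1)$, which costs at most an $\Omega$- and $d$-dependent constant and therefore preserves the stated rate.

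The principal obstacle is the second-half construction: engineering $(A^\ell_2,a^\ell_2,B^\ell_2)$ so that, for an \emph{arbitrary} prescribed $R^\ell$, the composite $B^\ell_2\ast\sigma(A^\ell_2\ast(\,\cdot\,)+a^\ell_2\bm 1)$ collapses to the linear map $-R^\ell\ast(\,\cdot\,)$ plus a controlled constant. This works only because (i) $R^\ell$ is a $1\times 1$ convolution and so acts pointwise through channel mixing, which can be absorbed entirely into the output channels of $B^\ell_2$, and (ii) the uniform boundedness of $f^{\ell-1}_{\text{Res}}$ on $\Omega$ keeps the second-half inner ReLU globally in its linear regime via a single, $\Omega$-dependent shift. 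The remaining work is careful bookkeeping of biases, channel indices, and the channel-dimension mismatch between $c_{\ell-1}$ and $(4\ell-1)^2$ across all blocks.
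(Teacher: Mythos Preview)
Your proposal is correct and follows essentially the same route as the paper: start from the classic CNN approximant of Theorem~\ref{thm:approx_CNN}, pair its layers two at a time, and in each ResNet block split the $C_\ell=2(4\ell-1)^2$ intermediate channels so that one half carries $K^{2\ell-1}$ and the other half carries an identity embedding which, after applying $B^\ell_2=-R^\ell$, cancels the prescribed skip. The only stylistic difference is that the paper linearizes \emph{both} ReLUs in every block $\ell<L$ (choosing $a^\ell$ and $b^\ell$ large on all channels) and introduces the single genuine nonlinearity only at $\ell=L$, whereas you reproduce the classic CNN features $f^\ell_{\text{Res}}=f^{2\ell}_{\text{CNN}}$ block by block, letting the outer $\sigma$ act as the classic layer's own ReLU; since the classic CNN's intermediate biases from Lemma~\ref{lemm:nonl_big_K} are themselves chosen to keep those ReLUs in the linear regime, the two constructions coincide.
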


\begin{proof}
	Without loss of generality, let us assume that $\frac{d}{4}$ is an integer. If not, some zero boundary layers
	can be added to enlarge the dimension of the original data to satisfy this condition. 
	First, we assume a CNN function $\widehat f(x) = f_N({\mathcal V}(x)) = \alpha \cdot \sigma\left(W{\mathcal V}(x) + \beta \right) $ defined in \eqref{eq:cnn}, which approximates $f(x)$, as described in Theorem~\ref{thm:approx_CNN}:
	$$
	\|f - \widehat f\|_{L^2(\Omega)} \lesssim N^{-\frac{1}{2} - \frac{3}{2d^2}}  \|f\|_{ \mathscr{K}_1(\mathbb D)}.
	$$
	Then, we can construct $\widetilde{f}(x)$ using kernels $K^\ell$ for $\ell=1:d/2$, $\widehat{b}^{d/2}$ and $\widehat a$ in $\widehat{f}(x)$. We define 
	\begin{equation}\label{key}
		A^\ell = \begin{pmatrix}
			K^{2\ell-1} \\
			I \\
			0
		\end{pmatrix}
		, \quad [a^\ell]_q = \max_{m,n}\sup_{x\in \Omega}\left|\left[A^\ell\ast f^{\ell-1}(x) \right]_{q,m,n}\right|, \quad \ell = 1:L,
	\end{equation}
	where $I$ is the identity kernel and $0$ is used to pad the output channel to be $C_\ell$.
	That is,
	\begin{equation}\label{key}
		\sigma\left(A^\ell \ast f^{\ell-1}(x) + a^\ell \bm 1\right) = \begin{pmatrix}
			K^{2\ell-1}\ast f^{\ell-1} \\
			f^{\ell-1} \\
			0
		\end{pmatrix} + a^\ell \bm 1.
	\end{equation}
	Moreover, we define
	\begin{equation}
		B^\ell = 
		\begin{pmatrix} 
			K^{2\ell}, -R^{\ell}, 0
		\end{pmatrix}, \quad \ell  =1:L
	\end{equation}
	and 
	\begin{equation}\label{key}
		\quad [b^\ell]_q = \max_{1\le s,t \le d}\sup_{x\in \Omega}\left| \left[R^\ell \ast f^{\ell-1}(x) +  B^\ell \ast \left(A^\ell\ast f^{\ell-1}(x) + a^\ell \bm 1\right) \right]_{q,s,t} \right|
	\end{equation}
	for  $\ell = 1:L-1$.
	Given the definition of $b^\ell$,  it follows that \\
	$\left[ R^\ell \ast f^{\ell-1}(x) + B^\ell \ast \sigma \left(A^\ell\ast f^{\ell-1}(x) + a^\ell \bm 1\right) + b^\ell \bm 1 \right]_{q,s,t} \ge 0$. Given the definition of $\sigma(t) = \rm{ReLU}(t) := \max\{0,t\}$, we have
	\begin{equation}\label{key}
		\begin{aligned}
			f^\ell(x) &= \sigma \left( R^\ell \ast f^{\ell-1}(x) + B^\ell \ast \sigma \left(A^\ell\ast f^{\ell-1}(x) + a^\ell \bm 1\right) + b^\ell \bm 1 \right)\\
			&= R^\ell \ast f^{\ell-1}(x) + B^\ell \ast \left(A^\ell\ast f^{\ell-1}(x) + a^\ell \bm 1\right)  + b^\ell \bm 1\\
			&= R^\ell \ast f^{\ell-1}(x) + K^{2\ell} \ast K^{2\ell-1} \ast f^{\ell-1}(x) - R^\ell\ast f^{\ell-1}(x) + B^\ell \ast a^\ell \bm 1 + b^\ell \bm 1\\
			&=K^{2\ell} \ast K^{2\ell-1} \ast f^{\ell-1}(x)+ B^\ell \ast a^\ell \bm 1 + b^\ell \bm 1
		\end{aligned}
	\end{equation}
	for $\ell=1:L-1$. By taking $\ell =L$, we have
	\begin{equation}\label{key}
		\begin{aligned}
			f^L(x) &= R^L \ast f^{L-1}(x) + B^L \ast \sigma \left(A^L\ast f^{L-1}(x) + a^L \bm 1\right) +b^L \bm 1\\
			&= K^{2L} \ast K^{2L-1} \ast f^{L-1}(x) + K^{2L}\ast a^L \bm 1 + b^L \bm 1 \\
			&= K^{d/2} \ast K^{d/2-1} \ast \cdots \ast K^{1} \ast x + B,
		\end{aligned}
	\end{equation}
	where $B$ is a constant tensor, which is similar to the case presented in Lemma~\ref{lemm:nonl_big_K}. 
	This suggests $b^L$ defined as
	\begin{equation}\label{key}
		[b^L]_n = {[ \beta]_n} - [B]_{n,d/2,d/2},
	\end{equation}
	where $\beta \in \mathbb{R}^N$ {proceeds from} $\widehat f(x) = f_N( \mathcal V (x)) = \alpha \cdot \sigma\left(W \mathcal V(x) + \beta \right)$ for any $x\in \Omega$. 
	Thus, it follows that
	\begin{equation}\label{key}
		f^{L}(x) = \widehat f^L(x),
	\end{equation}
	and we complete the proof by taking $a = \widehat{a}$. 
\end{proof}

\begin{theorem}\label{thm:approx_pResNet}
	Let $\Omega \subset \mathbb{R}^{d\times d}$ be bounded, and assume that $f: \Omega \mapsto \mathbb{R}$ with $ \|f\|_{ \mathscr{K}_1(\mathbb D)}$. 
	For any given $R^\ell \in\mathbb{R}^{c_{\ell-1}\times c_{\ell} \times1\times1}$, there is a pre-act ResNet network $\widetilde f(x)$ as in~\eqref{eq:presnet}, where the hyperparameters satisfy
	\begin{equation}
		\begin{aligned}
			&\text{depth (number of  blocks):} \quad &L& = \lfloor \lfloor d/2\rfloor/2\rfloor, \\
			&\text{width (number of channels):} \quad &c_\ell& = (4\ell + 1 )^2,~C_\ell =2(4\ell-1)^2
		\end{aligned}
	\end{equation}
	for $\ell=1:L-1$, $C_L =2(4L-1)^2$ and $c_L = N+2$, such that 
	\begin{equation}\label{key}
		\|f - \widetilde f\|_{L^2(\Omega)}   \lesssim  N^{-\frac{1}{2}-\frac{3}{2d^2}} \|f\|_{ \mathscr{K}_1(\mathbb D)}.
	\end{equation}
\end{theorem}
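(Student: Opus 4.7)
The plan is to mirror the proof of Theorem~\ref{thm:approx_ResNet}: first approximate $f$ on $\Omega$ by a classical CNN $\widehat f$ produced by Theorem~\ref{thm:approx_CNN}, writing its output as $\widehat a \cdot \mathcal V(f_N(\mathcal V(x)))$ with $f_N(y) = \alpha \cdot \sigma(W y + \beta)$, and then construct a pre-act ResNet $\widetilde f$ that reproduces $\widehat f$ exactly on $\Omega$. The $L = \lfloor \lfloor d/2\rfloor/2\rfloor$ pre-act blocks will swallow the $\lfloor d/2\rfloor$ underlying $3\times 3$ kernels $K^1,\ldots,K^{\lfloor d/2\rfloor}$ two at a time, in direct analogy with the ResNet construction. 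WLOG I assume $\lfloor d/2\rfloor$ is even by zero-padding the input if necessary.

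For each intermediate block $\ell < L$, I take $A^\ell = (K^{2\ell-1},\, I,\, 0)^\top$ (the odd-indexed kernel of the pair, an identity copy of $f^{\ell-1}$, and zero padding up to $C_\ell$ channels) and pick $a^\ell$ componentwise large enough, using boundedness of $\Omega$ as in the last step of the proof of Lemma~\ref{lemm:nonl_big_K}, so that the inner ReLU $\sigma(A^\ell\ast f^{\ell-1}+a^\ell\bm1)$ acts as the identity on $\Omega$. I then choose $B^\ell = (K^{2\ell},\, -R^\ell,\, 0)$ and $b^\ell$ large enough that the outer ReLU is also identity on $\Omega$. The prescribed skip $R^\ell\ast f^{\ell-1}$ cancels against the $-R^\ell$ branch inside $B^\ell$, leaving $f^\ell = K^{2\ell}\ast K^{2\ell-1}\ast f^{\ell-1} + \text{const}$, exactly the classical CNN recursion of Theorem~\ref{thm:approx_CNN}. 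In particular $f^{L-1}(x) = K^{\lfloor d/2\rfloor - 1}\ast\cdots\ast K^1\ast x + B_{L-1}$ is affine in $x$.

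The main obstacle is the last block $\ell = L$: the outer ReLU must remain nonlinear to carry the one-hidden-layer NN approximation, yet the prescribed skip $R^L\ast f^{L-1}$ sits \emph{outside} that ReLU and no choice of $B^L$ alone can cancel it. This is why the hypothesis requires $c_L = N+2$ rather than $N$. My plan is to devote the first $N$ output channels of $B^L\ast\sigma(A^L\ast f^{L-1}+a^L\bm1)+b^L\bm1$ to realizing $[W\mathcal V(x)+\beta]_n$ at a designated center pixel (the same pixel used in Lemma~\ref{lemm:linear_decomp}), so that after the outer ReLU these channels contribute $\sigma([W\mathcal V(x)+\beta]_n)$ plus the unwanted affine residue $r_n(x):=[R^L\ast f^{L-1}]_{n,s_0,t_0}$ coming from the skip. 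I dedicate the two extra channels (with biases chosen large via boundedness) to producing $M + L(x)$ and $M - L(x)$, where $L(x)$ is an affine function of $x$ built from $f^{L-1}$; both arguments are positive on $\Omega$, so the outer ReLU acts trivially on them. Finally, I design the weight vector $a\in\mathbb R^{(N+2)d^2}$ to assign $[\alpha]_n$ to the center-pixel slot in channel $n$ for $n\le N$, and to combine the two extra channels with equal-magnitude opposite-sign entries so that the constants $M$ and the skip contributions on the last two channels cancel in the difference while $L(x)$ is tuned to subtract off $\sum_n [\alpha]_n\, r_n(x)$ exactly. This yields $a\cdot\mathcal V(f^L(x)) = \alpha\cdot\sigma(W\mathcal V(x)+\beta) = \widehat f(x)$ for all $x\in\Omega$, so $\widetilde f\equiv\widehat f$ on $\Omega$ and the $L^2$ bound of Theorem~\ref{thm:approx_CNN} transfers verbatim.
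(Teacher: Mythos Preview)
Your proposal is essentially the paper's own argument. The construction for $\ell<L$ is identical, and for the last block you and the paper both spend the two extra output channels to manufacture a free affine functional that cancels the unwanted skip residue $a\cdot\mathcal V(R^L\ast f^{L-1})$; the only cosmetic difference is that the paper uses the identity $\sigma(t)-\sigma(-t)=t$ on the auxiliary channels, whereas you linearize the outer ReLU there with a large bias $M$ and then take a signed difference.

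One point needs repair: the skip contributions on channels $N{+}1$ and $N{+}2$ do \emph{not} automatically cancel under your equal-magnitude opposite-sign weights, since $R^L$ is prescribed arbitrarily and there is no reason for $[R^L\ast f^{L-1}]_{N+1,s_0,t_0}$ to equal $[R^L\ast f^{L-1}]_{N+2,s_0,t_0}$. This leaves an extra affine term $g(x)=\gamma\bigl([R^L\ast f^{L-1}]_{N+1,s_0,t_0}-[R^L\ast f^{L-1}]_{N+2,s_0,t_0}\bigr)$ in the final sum. The fix is immediate and requires no new idea: since $L(x)$ at the designated pixel can be any affine functional of $\mathcal V(x)$ (via the full kernel decomposition of Theorem~\ref{thm:decomp_k_gloabl} applied to the appropriate rows of $B^L$), tune $2\gamma L(x)=-\sum_{n\le N}[\alpha]_n r_n(x)-g(x)$ rather than just $-\sum_{n\le N}[\alpha]_n r_n(x)$.
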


\begin{proof}
	We still assume that $\frac{d}{4}$ is an integer. Then, we construct pre-act ResNet as in~\eqref{eq:presnet}
	with {a} similar structure as {that described} in Theorem~\ref{thm:approx_ResNet}.
	According to Lemma~\ref{lemm:linear_decomp} and Lemma~\ref{lemm:nonl_big_K}, we notice that
	kernels $K^\ell$ for $\ell=1:\lfloor d/2\rfloor - 1$ as described in Theorem~\ref{thm:approx_CNN} are independent from $f(x)$. More precisely, these kernels can be defined by using $S^\ell$ presented in Theorem~\ref{thm:decomp_k_gloabl}. Thus, we take
	\begin{equation}\label{key}
		A^\ell = \begin{pmatrix}
			K^{2\ell-1} \\
			I \\
			0
		\end{pmatrix}
		, \quad [a^\ell]_q = \max_{m,n}\sup_{x\in \Omega}\left|\left[A^\ell\ast f^{\ell-1}(x) \right]_{q,m,n}\right|
	\end{equation}
	for $\ell = 1:L$ where $I$ is the identity kernel and $0$ is used to pad the output channel to be $C_\ell$.
	That is,
	\begin{equation}\label{key}
		\sigma\left(A^\ell \ast f^{\ell-1}(x) + a^\ell \bm 1\right) = \begin{pmatrix}
			K^{2\ell-1}\ast f^{\ell-1} \\
			f^{\ell-1} \\
			0
		\end{pmatrix} + a^\ell \bm 1.
	\end{equation}
	Moreover, we define
	\begin{equation}
		B^\ell = 
		\begin{pmatrix} 
			K^{2\ell}, -R^{\ell}, 0
		\end{pmatrix}, \quad \ell  =1:L-1
	\end{equation}
	and 
	\begin{equation}\label{key}
		\quad [b^\ell]_q = \max_{1\le s,t \le d}\sup_{x\in \Omega}\left|\left[B^\ell \ast \left(A^\ell\ast f^{\ell-1}(x) + a^\ell \bm 1\right) \right]_{q,s,t}\right|, \quad \ell = 1:L-1.
	\end{equation}
	This means {that}
	\begin{equation}\label{key}
		\begin{aligned}
			f^\ell(x)&= R^\ell \ast f^{\ell-1}(x) +\sigma\left( B^\ell \ast \sigma \left(A^\ell\ast f^{\ell-1}(x) + a^\ell \bm 1\right) + b^\ell \bm 1 \right)\\
			&= R^\ell \ast f^{\ell-1}(x) + B^\ell \ast \left(A^\ell\ast f^{\ell-1}(x) + a^\ell \bm 1 \right) + b^\ell \bm 1 \\
			&= R^\ell \ast f^{\ell-1}(x) + K^{2\ell} \ast K^{2\ell-1} \ast f^{\ell-1}(x) - R^\ell\ast f^{\ell-1}(x) + B^\ell \ast a^\ell \bm 1 + b^\ell \bm 1\\
			&= K^{2\ell} \ast K^{2\ell-1} \ast f^{\ell-1}(x)+ B^\ell \ast a^\ell \bm 1 + b^\ell \bm 1
		\end{aligned}
	\end{equation}
	for $\ell=1:L-1$. 
	
	Next, we show how to define $B^L$. First, let us denote $f_N({\mathcal V}(x)) = \alpha \cdot \sigma \left( Wx + \beta\right)$ {as} the fully connected neural network approximation for $f(x)$ as in Theorem~\ref{thm:approx_fN}. In addition, we denote $\overline f(x) = \overline a \cdot \overline{f}^{d/2}(x) = f_N(x)$ {as} the CNN approximation of $f(x)$ as in Theorem~\ref{thm:approx_CNN}. Now, we can take 
	\begin{equation}\label{key}
		[a]_n =  \begin{cases} [\overline a]_n, \quad &\text{ if } n \le Nd^2,\\
			-1, \quad &\text{ if } n = Nd^2 + (d/2)^2 \text{ or } (N+1)d^2 + (d/2)^2, \\
			0, \quad &\text{ others}.
		\end{cases}
	\end{equation}
	{Recalling} that $f^{L-1}(x)$ in the  pre-act ResNet now is a linear function, we have
	\begin{equation}\label{key}
		a \cdot {\mathcal V}	\left( \left[R^L \ast f^{L-1}(x)\right] \right) = h\cdot {\mathcal V}(x) + c,
	\end{equation}
	where $h \in \mathbb{R}^{d^2}$ and $c\in \mathbb{R}$. 
	Then, we can redefine a new  fully connected neural network function
	\begin{equation}\label{key}
		f_{N+2} := \alpha \cdot \sigma \left( W{\mathcal V}(x) + \beta\right) + \sigma(h\cdot {\mathcal V}(x) + c) + \sigma( - h\cdot {\mathcal V}(x) - c).
	\end{equation}
	According to Lemma~\ref{lemm:nonl_big_K}, we have a CNN function $\widehat{f}(x)$ such that
	\begin{equation}\label{key}
		[\widehat{f}^{d/2}(x)]_{:,d/2, d/2}  = \left[ \sigma (\widehat K^{d/2} \ast\widehat  f^{d/2-1} + \widehat b^\ell) \right]_{:,d/2, d/2}= \begin{pmatrix}
			\sigma \left( W{\mathcal V}(x) + \beta \right) \\
			\sigma(h\cdot {\mathcal V}(x) + c)\\
			\sigma( - h\cdot {\mathcal V}(x) - c)
		\end{pmatrix}.
	\end{equation}
	As noticed before, $K^\ell$ for $\ell=1:d/2-1$ have the same structure in Lemma~\ref{lemm:nonl_big_K}. As a result, we have
	\begin{equation}\label{key}
		\widehat{f}^{d/2-1}(x) = \overline f^{d/2-1}(x), \quad \forall x \in \Omega.
	\end{equation}
	Then, we take 
	\begin{equation}\label{key}
		K^{L} = \widehat{K}^{d/2},
	\end{equation}
	which implies that
	\begin{equation}\label{key}
		\begin{aligned}
			f^L(x) &= R^L \ast f^{L-1}(x) + \sigma\left( B^L \ast \sigma \left(A^L\ast f^{L-1}(x) + a^L \bm 1\right) + b^L \bm 1 \right)\\
			&= R^L \ast f^{L-1}(x) + \sigma \left(K^{2L} \ast K^{2L-1} \ast f^{L-1}(x) + B  + b^L \bm 1\right)\\
			&= R^L \ast f^{L-1}(x) + \sigma \left(\widehat K^{d/2} \ast K^{d/2-1} \ast \cdots \ast K^{1}\ast x + B  + b^L \bm 1\right),
		\end{aligned}
	\end{equation}
	where $B$ is a constant tensor similar to the case described in Theorem~\ref{thm:approx_ResNet}. Thus, we can take
	\begin{equation}\label{key}
		[b^L]_n = \begin{cases}
			[\beta]_n - [B]_{n,d/2,d/2}, \quad &\text{ if } n \le N, \\ 
			c - [B]_{N+1,d/2,d/2},\quad &\text{ if } n  = N+1, \\ 
			-c - [B]_{N+2,d/2,d/2}, \quad &\text{ if } n = N+2,
		\end{cases}
	\end{equation} 
	which leads to
	\begin{equation}\label{key}
		\begin{aligned}
			\left[f^L(x)\right]_{:,d/2, d/2}&= \left[ R^L \ast f^{L-1}(x) + \sigma \left(\widehat K^{d/2} \ast K^{d/2-1} \ast \cdots \ast K^{1}\ast x + B  + b^L \bm 1\right)\right]_{:,d/2, d/2} \\
			&= \left[ R^L \ast f^{L-1}(x) + \sigma \left(\widehat K^{d/2} \ast K^{d/2-1} \ast \cdots \ast K^{1}\ast x + \widehat \beta \bm 1\right) \right]_{:,d/2, d/2}\\
			&=  \left[ R^L \ast f^{L-1}(x)  \right]_{:,d/2, d/2} + \begin{pmatrix}
				\sigma \left( W{\mathcal V}(x) + \beta \right) \\
				\sigma(h\cdot {\mathcal V}(x) + c)\\
				\sigma( - h\cdot {\mathcal V}(x) - c)
			\end{pmatrix},
		\end{aligned}
	\end{equation}
	where $\widehat \beta = (\beta, c, -c)$. 
	Noticing that $\sigma(x) + \sigma(-x) = x$, finally we check that
	\begin{equation}\label{key}
		\begin{aligned}
			\widetilde f(x) &= a\cdot {\mathcal V}(f^L)(x) \\
			&= a \cdot {\mathcal V}\left( R^L \ast f^{L-1}(x)  \right) + (\alpha, -1, -1) \cdot  \begin{pmatrix}
				\sigma \left( W{\mathcal V}(x) + \beta \right) \\
				\sigma(h\cdot {\mathcal V}(x) + c)\\
				\sigma( - h\cdot {\mathcal V}(x) - c) 
			\end{pmatrix} \\
			&= l(x) + f_N( {\mathcal V}(x)) - \left( \sigma\left(l(x)\right) + \sigma\left(-l(x)\right) \right) \\
			& = f_N({\mathcal V}(x)), \quad \forall x \in \Omega,
		\end{aligned}
	\end{equation}
	where $l(x) = h \cdot {\mathcal V}(x) + c$.
	This completes the proof. 
\end{proof}

\paragraph{The approximation property of MgNet.} First, we introduce a typical MgNet~\cite{he2019mgnet,he2021interpretive} network used in image classification.
\begin{algorithm}[H]
	\footnotesize
	\caption{$u^J=\text{MgNet}(f)$}
	\label{alg:mgnet}
	\begin{algorithmic}[1]
		\STATE {\bf Input}: number of grids J, number of smoothing iterations $\nu_\ell$ for $\ell=1:J$, 
		number of channels $c_{f,\ell}$ for $f^\ell$ and $c_{u,\ell}$ for $u^{\ell,i}$ on $\ell$-th grid.
		\STATE Initialization:  $f^1 = f_{\rm in}(f)$, $u^{1,0}=0$
		\FOR{$\ell = 1:J$}
		\FOR{$i = 1:\nu_\ell$}
		\STATE Feature extraction (smoothing):
		\begin{equation}\label{eq:mgnet}
			u^{\ell,i} = u^{\ell,i-1} + \sigma \circ B^{\ell,i} \ast \sigma\left({f^\ell -  A^{\ell} \ast u^{\ell,i-1}}\right) \in \mathbb{R}^{c_{u,\ell}\times n_\ell\times m_\ell}.
		\end{equation}
		\ENDFOR
		\STATE Note: 
		$
		u^\ell= u^{\ell,\nu_\ell} 
		$
		\STATE Interpolation and restriction:
		\begin{equation}
			\label{eq:interpolation}
			u^{\ell+1,0} = \Pi_\ell^{\ell+1}\ast_2u^{\ell} \in \mathbb{R}^{c_{u,\ell+1}\times n_{\ell+1}\times m_{\ell+1}}.
		\end{equation}
		\begin{equation}
			\label{eq:restrict-f}
			f^{\ell+1} = R^{\ell+1}_\ell \ast_2 (f^\ell - A^\ell(u^{\ell})) + A^{\ell+1} \ast u^{\ell+1,0} \in \mathbb{R}^{c_{f,\ell+1}\times n_{\ell+1}\times m_{\ell+1}}.
		\end{equation}
		\ENDFOR
	\end{algorithmic}
\end{algorithm}
Here $ \Pi_\ell^{\ell+1}\ast_2$ and $R^{\ell+1}_\ell\ast_2$ in \eqref{eq:interpolation} and \eqref{eq:restrict-f}, respectively, which work as the coarsening operation, are defined as convolutions with stride two~\cite{goodfellow2016deep}.
As we do not include coarsening (sub-sampling or pooling ) layers in this study, we propose the following version of {the} feature extraction (smoothing) iteration in MgNet to study its approximation property.
\begin{description}
	\item[MgNet:]
	\begin{equation}\label{eq:mgnet_nopooling}
		\begin{cases}
			f^{\ell}(x) &= R^\ell \ast f^{\ell-1}(x) +  \sigma \left( B^\ell \ast\sigma \left(\theta^\ell \ast x- A^\ell \ast f^{\ell-1}(x) + a^\ell \bm 1 \right) + b^\ell \bm 1\right),  \\
			f(x) &= a \cdot {\mathcal V}(f^{L}(x)),
		\end{cases}
	\end{equation}
	for $\ell = 1:L$, where $f^0(x) = x \in \mathbb{R}^{d\times d}$, $A^\ell \in \mathbb{R}^{c_{\ell-1}\times C_{\ell} \times 3 \times 3}$, $B^\ell \in \mathbb{R}^{C_{\ell}\times c_{\ell} \times 3 \times 3}$, $R^\ell \in \mathbb{R}^{c_{\ell-1} \times c_{\ell} \times 1 \times 1}$, $a^\ell \in \mathbb{R}^{C_\ell}$, $\theta^{\ell} \in \mathbb{R}^{1\times C_\ell\times 3\times 3}$, $b^{c_\ell}$, and $a\in \mathbb{R}^{c_Ld^2}$.
\end{description}

As discussed in {relation to} ResNet and pre-act ResNet, we add extra $1\times 1$ convolutional kernels $R^\ell$ and $\theta^\ell$ in case of a change of channel. {Given that} we can recover pre-act ResNet by taking $\theta^\ell = 0$ in \eqref{eq:mgnet_nopooling}, we have the following theorem {pertaining to} the approximation property of MgNet.
\begin{theorem}
	Let $\Omega \subset \mathbb{R}^{d\times d}$ be bounded, and assume that $f: \Omega \mapsto \mathbb{R}$ with $\|f\|_{ \mathscr{K}_1(\mathbb D)} < \infty$. 
	Consider $\widetilde f(x)$ {as an} MgNet in \eqref{eq:mgnet_nopooling} with  any $R^\ell \in\mathbb{R}^{c_{\ell-1}\times c_{\ell} \times1\times1}$ given a prior, and hyperparameters that satisfy
	\begin{equation}
		\begin{aligned}
			&\text{depth (number of  blocks):} \quad &L& =\ \lfloor \lfloor d/2\rfloor/2\rfloor, \\
			&\text{width (number of channels):} \quad &c_\ell& = (4\ell + 1 )^2, ~C_\ell =2(4\ell-1)^2, 
		\end{aligned}
	\end{equation}
	for $\ell=1:L-1$, $C_L =2(4L-1)^2$, and $c_L = N+2$, such that 
	\begin{equation}\label{key}
		\|f - \widetilde f\|_{L^2(\Omega)}   \lesssim  N^{-\frac{1}{2}-\frac{3}{2d^2}} \|f\|_{ \mathscr{K}_1(\mathbb D)}.
	\end{equation}
\end{theorem}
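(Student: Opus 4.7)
The plan is to leverage the observation already highlighted in the text that MgNet degenerates into pre-act ResNet whenever $\theta^\ell=0$, and then invoke Theorem~\ref{thm:approx_pResNet} directly. Concretely, given the fixed family $\{R^\ell\}$, I would first set $\theta^\ell = 0 \in \mathbb{R}^{1\times C_\ell \times 3 \times 3}$ for every $\ell = 1:L$. Under this choice, the MgNet iteration in \eqref{eq:mgnet_nopooling} reduces to
\begin{equation}
f^{\ell}(x) = R^\ell \ast f^{\ell-1}(x) + \sigma\!\left( B^\ell \ast \sigma\!\left( -A^\ell \ast f^{\ell-1}(x) + a^\ell \bm 1\right) + b^\ell \bm 1 \right),
\end{equation}
which differs from the pre-act ResNet form in \eqref{eq:presnet} only by a sign in front of the first convolution inside the inner $\sigma$. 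Since $A^\ell$ is a free trainable kernel with the same shape in both models, the substitution $A^\ell \mapsto -A^\ell$ identifies the two function classes: the family of MgNet functions with $\theta^\ell = 0$ (and the same prescribed $R^\ell$) coincides with the family of pre-act ResNet functions with the same $R^\ell$.

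Having established this identification, I would then apply Theorem~\ref{thm:approx_pResNet} to the target $f$. That theorem provides a pre-act ResNet $\widehat f(x)$ with depth $L = \lfloor \lfloor d/2\rfloor/2\rfloor$, channel counts $c_\ell = (4\ell+1)^2$ and $C_\ell = 2(4\ell-1)^2$ for $\ell=1:L-1$, $C_L = 2(4L-1)^2$, $c_L = N+2$, and prescribed skip kernels $R^\ell$, satisfying $\|f - \widehat f\|_{L^2(\Omega)} \lesssim N^{-\frac{1}{2}-\frac{3}{2d^2}} \|f\|_{\mathscr{K}_1(\mathbb D)}$. Reading off the kernels $A^\ell_{\text{pre}}, B^\ell, a^\ell, b^\ell, a$ from this construction and setting
\begin{equation}
\theta^\ell = 0, \qquad A^\ell = -A^\ell_{\text{pre}}, \qquad B^\ell, a^\ell, b^\ell, a \text{ as in } \widehat f,
\end{equation}
yields an MgNet $\widetilde f(x)$ of exactly the declared architecture with $\widetilde f(x) = \widehat f(x)$ for every $x \in \Omega$, and the error bound follows immediately.

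There is essentially no technical obstacle here: the result is a corollary of the construction in Theorem~\ref{thm:approx_pResNet} via the structural fact that the extra $\theta^\ell \ast x$ term in MgNet can be switched off. The only bookkeeping subtlety is the sign discrepancy inside the inner ReLU, which is harmless because $A^\ell$ is not prescribed a priori and can absorb the sign. Everything else -- the matching of depth, channel widths, the treatment of the skip kernels $R^\ell$, and the $N+2$ final channels needed for the $\sigma(\ell(x)) + \sigma(-\ell(x)) = \ell(x)$ trick used to cancel the unwanted linear contribution $a \cdot \mathcal V(R^L \ast f^{L-1}(x))$ -- is inherited verbatim from the pre-act ResNet proof.
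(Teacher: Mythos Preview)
Your proposal is correct and matches the paper's approach exactly: the paper states the theorem as an immediate consequence of the observation (made just before the statement) that setting $\theta^\ell = 0$ in \eqref{eq:mgnet_nopooling} recovers pre-act ResNet, after which Theorem~\ref{thm:approx_pResNet} applies directly. Your treatment is in fact slightly more careful than the paper's, since you explicitly note and dispose of the sign discrepancy in front of $A^\ell$, which the paper glosses over.
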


\section{Conclusion}\label{sec:concl}
By carefully studying the decomposition theorem for convolutional kernels with large spatial size, 
we obtained the universal approximation property for a typical deep ReLU CNN structure. 
In general, we proved that deep { multi-channel ReLU} CNNs can represent one-hidden-layer ReLU NNs. Consequently, this representation result provides the same asymptotic approximation rate for deep ReLU CNNs as for one-hidden-layer ReLU NNs.
{Moreover}, we established approximation results for one version of ResNet, pre-act ResNet, and MgNet, based on the connections between these commonly used CNN models. 
This study provides new evidence of the theoretical foundation
of classical CNNs and popular architecture, such as ResNet, pre-act ResNet, and MgNet. 

Although the approximation properties do not show that CNNs should work better than DNNs in terms of approximation power, this study furthers the fields in understanding of CNNs in a significant way. For example, the success of CNNs may imply that the $\|f\|_{\mathscr{K}_1(\mathbb D)}$ norm is very small for the target feature extraction function $f$ in image classification or that $f$ belongs to a very special function class that can be efficiently represented or approximated by CNNs efficiently.
In addition, we anticipate that it will be possible to apply this kind of approximation results in designing new CNN structures, especially in the context of scientific computing~\cite{karniadakis2021physics}.
Furthermore, {as} the pooling operation plays a key role in practical CNNs, 
a natural future direction proceeding from this study is to derive approximation results for CNNs involving pooling layers.

\section*{Acknowledgements}
All the authors were partially supported by the Center for Computational Mathematics and Applications (CCMA) at The Pennsylvania State University.
The first author was also supported by the R.H. Bing Fellowship from The University of Texas at Austin. 
In addition, the third author was supported by the Verne M. William Professorship Fund from The Pennsylvania State University and by the National Science Foundation (Grant No. DMS-1819157 and DMS-2111387).

\section*{Conflict of interest}
The authors declare that they have no conflict of interest.

\section*{Data availability statement}
No datasets were generated or analyzed during the current study.

\newpage
\bibliographystyle{abbrv}
\bibliography{CNN_Approx.bib}

\end{document}